\newcommand{\B}[1]{\textbf{#1}}
\newcommand{\I}[1]{\textit{#1}}
\definecolor{Gray}{gray}{0.90}
\definecolor{LightCyan}{rgb}{0.88,1,1}
\DeclareMathOperator{\softmax}{softmax}
\definecolor{cvprblue}{rgb}{0.21,0.49,0.74}
\author{Li Ren, Chen Chen, Liqiang Wang, Kien Hua\\
Department of Computer Science\\
University of Central Florida, USA\\
{\tt\small \{Li.Ren, Chen.Chen, Liqiang.Wang, Kien.Hua\}@ucf.edu}
}
\title{DA-VPT: Semantic-Guided Visual Prompt Tuning for Vision Transformers}
\begin{document}
\maketitle

\begin{abstract}

Visual Prompt Tuning (VPT) has become a promising solution for Parameter-Efficient Fine-Tuning (PEFT) approach for Vision Transformer (ViT) models by partially fine-tuning learnable tokens while keeping most model parameters frozen. Recent research has explored modifying the connection structures of the prompts. However, the fundamental correlation and distribution between the prompts and image tokens remain unexplored. In this paper, we leverage \textit{metric learning} techniques to investigate how the distribution of prompts affects fine-tuning performance. Specifically, we propose a novel framework, \textbf{D}istribution \textbf{A}ware \textbf{V}isual \textbf{P}rompt Tuning (DA-VPT), to guide the distributions of the prompts by learning the distance metric from their class-related semantic data. Our method demonstrates that the prompts can serve as an effective bridge to share semantic information between image patches and the class token. We extensively evaluated our approach on popular benchmarks in both recognition and segmentation tasks. The results demonstrate that our approach enables more effective and efficient fine-tuning of ViT models by leveraging semantic information to guide the learning of the prompts, leading to improved performance on various downstream vision tasks. The code is released on \url{https://github.com/Noahsark/DA-VPT}.

\end{abstract}

%%%% ============================== Introduction =============================================

\begin{figure}[ht]
\centering
	\includegraphics[width = 0.45\textwidth]{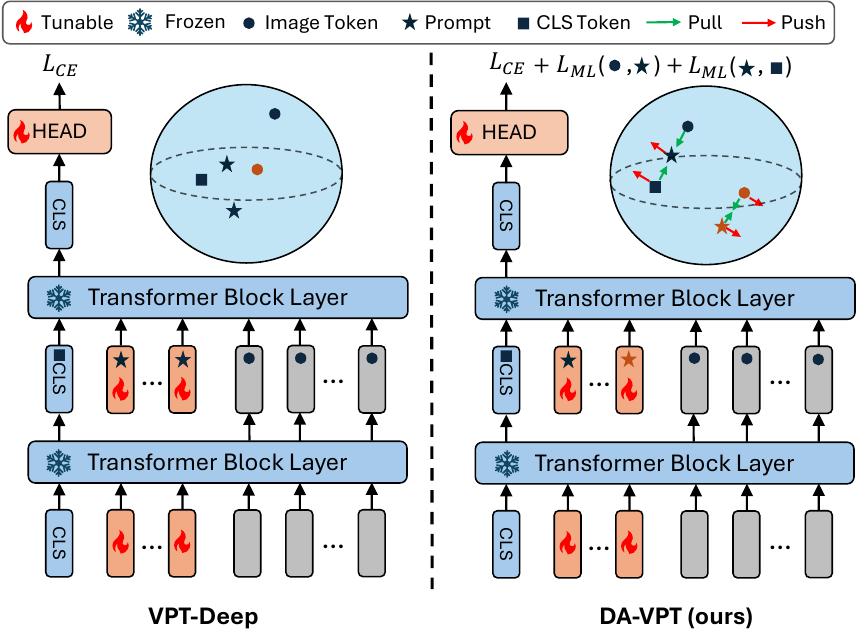}
\caption{
\small
\textbf{Comparison between VPT-Deep and DA-VPT.} \textbf{Left (VPT-Deep):} Prompts are guided solely by the recognition task, leading to unconstrained distributions between prompts and visual tokens. This allows prompts to attract features from arbitrary classes, potentially hindering the class token's ability to aggregate class-specific information. \textbf{Right (DA-VPT):} Prompts are jointly optimized by the main task and semantic metric learning objectives. The semantic clustering aligns the distributions of prompts, visual tokens, and class tokens, enabling more effective class-specific information aggregation through semantically-guided attention.
}

\label{fig:introduction}
\end{figure}

\section{Introduction} \label{sec:intro}

Recent advances in model scaling and dataset expansion \citep{deng2009imagenet, sun2017revisiting, mahajan2018exploring} have led to powerful vision foundation models, particularly those based on Vision Transformer (ViT) architectures \citep{vit_2020}. These models have demonstrated exceptional performance across various computer vision tasks \citep{he2020momentum, radford2021learning, he2022masked}. While fine-tuning these models for downstream tasks like visual recognition \citep{vit_2020} or semantic segmentation \citep{kirillov2023segment} has become standard practice, the conventional full fine-tuning approach faces significant challenges, including high computational costs, overfitting, and catastrophic forgetting \citep{kornblith2019better, nguyen2019toward}. These challenges have motivated the development of Parameter-Efficient Fine-Tuning (PEFT) methods that selectively update a small subset of model parameters while keeping the majority frozen \citep{kornblith2019better, adapter19, 2020adapterhub, zaken2021bitfit, li2021prefix, adaptformer22, jia2022visual, ren2024learning}.

Initially emerging from the NLP domain, \citet{adapter19} and subsequent works \citep{2020adapterhub, hu2021lora} demonstrated that updating a minimal number of parameters could achieve performance comparable to full fine-tuning. These techniques were later adapted to computer vision by \citet{adaptformer22}, who introduced parallel residual networks alongside the ViT backbone. A significant advancement came from \citet{jia2022visual}, who proposed Visual Prompt Tuning (VPT). This method introduces learnable tokens called \textit{visual prompts} at the input level of each ViT layer, effectively aligning downstream task distributions with pre-training data distributions through learnable data-level representations. Building on this foundation, \citet{yoo2023improving} and \citet{han20232vpt} further enhanced VPT by implementing cross-layer prompt connections with dynamic gating mechanisms, enabling adaptive control of prompt positioning and quantity.

However, existing VPT approaches primarily focus on manipulating prompt connections and structure, while overlooking the intrinsic relationship between prompts and data representations. Specifically, current VPT and its related methods \citep{jia2022visual, yoo2023improving, han20232vpt, pei2024sa2vp} initialize prompts randomly and optimize them solely through downstream task objectives. Although recent work \citep{wang2024revisiting} demonstrates improved learning efficiency through data-driven prompt initialization, the potential of leveraging discriminative and class-aware information remains largely unexplored. To deepen our understanding of prompt functionality and distribution, we investigate prompt-token relationships by addressing a fundamental question: \textbf{Could prompts be guided to facilitate information flow between image and class tokens to enhance representation learning?}

To address this question, we introduce a novel method that guides VPT optimization by leveraging semantic connections between visual prompts, visual tokens, and class tokens. We propose connecting prompts and visual data by constructing and learning a semantic metric between them in the deep layers of the ViT. For each prompt in these layers, we establish a semantic connection with its closest labeled class. As illustrated in Figure~\ref{fig:introduction}, we construct a semantic metric in the latent space by comparing prompts with corresponding image patches. Specifically, we aim to minimize the distance between visual prompts and visual tokens of the same class while maximizing separation from the visual tokens of different classes. We also apply a similar semantic metric between class tokens and prompts.

Our key insight is to increase the likelihood that prompts capture semantic information from same-class visual tokens while filtering out unrelated information. Through semantic metrics in both visual feature and prompt spaces, we demonstrate the effective transfer of relevant semantic information from visual tokens to class tokens via class-specific prompts. In other words, our framework employs related prompts as a \textit{bridge} to connect class tokens and image patch semantic information through guided attention maps.

Extensive experiments across 24 visual recognition tasks in both Fine-Grained Visual Classification (FGVC) \citep{jia2022visual} and Visual Task Adaptation Benchmark (VTAB-1k) \citep{zhai2019large} demonstrate substantial improvements over standard VPT. Our method shows consistent effectiveness with both supervised and self-supervised pre-trained models, including MoCo and MAE. Additional evaluations on segmentation tasks further confirm that our approach significantly improves prompt learning efficiency and downstream task performance while requiring fewer prompts and learnable parameters compared to baseline VPT and its related state-of-the-art methods.
Our main contributions are:

\begin{itemize}[leftmargin=*]
\item We propose \textbf{D}istribution \textbf{A}ware \textbf{V}isual \textbf{P}rompt \textbf{T}uning (DA-VPT), a novel framework that enhances prompt learning by constructing semantic metrics between prompts and corresponding image feature patches in deep ViT layers.
\item We demonstrate that prompts can effectively bridge semantic information between image patches and class tokens through the attention mechanism, highlighting the importance of guided prompt learning.
\item We validate our method's effectiveness through extensive experiments on 24 visual recognition tasks and 2 segmentation tasks, showing significant improvements over vanilla VPT and its related works for both supervised and self-supervised pre-trained vision models.
\end{itemize}
%%%% =============================== Related Works ========================================

\section{Related Works}
\label{sec:related}

\B{Parameter-Efficient Fine-Tuning (PEFT)}

Transformers, initially introduced by \citet{transformer_2017}, have revolutionized various domains through pre-training, from natural language processing (e.g., LLaMA \citep{touvron2023llama}, GPT \citep{brown2020language}) to computer vision (e.g., MAE \citep{mae_2022}, CLIP \citep{clip_2021}, ViT-22b \citep{dehghani2023scaling}). PEFT approaches have emerged to address the computational challenges of fine-tuning these large models by selectively updating only a subset of parameters. Early work by \citet{kornblith2019better} focused on training only the classification head, while \citet{zaken2021bitfit} demonstrated significant improvements by tuning bias terms alone. \citet{lian2022scaling} and \citet{xie2023difffit} further refined these approaches by introducing adjustable shifting and scaling factors. Another significant direction in PEFT, pioneered by \citet{adapter19}, involves incorporating lightweight adapter modules alongside Transformer backbones.

\B{Visual Prompt Tuning (VPT)} 
As a prominent branch of PEFT, prompt tuning introduces learnable tokens alongside input data to incorporate task-specific information \citep{li2021prefix, liu2023pre, lester2021power, liu2021p}. \citet{jia2022visual} pioneered the application of prompts in Vision Transformers (ViT), introducing VPT-Shallow for input layer modification and VPT-Deep for cross-layer integration. This foundational work catalyzed numerous developments in the field: \citet{gao2022visual} adapted visual prompts for test-time domain adaptation, while \citet{compositionalvpt2023} extended the approach to video recognition. Subsequent studies enhanced VPT's capabilities through dynamic mechanisms for optimizing prompt quantity and placement \citep{han20232vpt, yoo2023improving}, direct connections between intermediate layers and task-specific heads \citep{tu2023visual}, and spatial selection mechanisms for coordinating attention between image patches and visual prompts \citep{pei2024sa2vp}.

\B{Integration of PEFT Approaches}
While recent comprehensive approaches have demonstrated success in combining multiple PEFT methods \citep{chavan2023one, zhang2022neural}, our work focuses specifically on integrating bias optimization with VPT, which we empirically found to be sufficiently effective for demonstrating our method's capabilities. A comprehensive evaluation of combinations with other PEFT methods lies beyond the scope of this work.

\B{Metric Learning (ML)} 
Metric Learning focuses on learning representations that effectively capture similarities and differences between data samples in the embedding space. Early approaches employed \I{contrastive loss} to differentiate between class samples \citep{chopra2005learning, hadsell2006}. This evolved into \I{triplet loss} methods that introduce an \I{anchor} point as a proxy to simultaneously compare positive and negative samples with specified margins \citep{cheng2016person, kim2020proxy, ren2024towards}.

Advanced metric learning techniques have incorporated \I{Neighbourhood Components Analysis (NCA)} to better understand data distributions and class relationships \citep{roweis2004nca, movshovitz2017no, teh2020proxynca++, kim2020proxy, mix2021, roth2022non}. Recent studies have demonstrated particular success in applying NCA-based metric learning to Vision Transformer architectures \citep{hyp2022, recallk_2022, kotovenko2023cross}, emphasizing the crucial role of data distributions in learning discriminative representations \citep{wang2017adversarial, laradji2020m, ren2021beyond, ren2024towards}. Further investigations by \citet{tsai2024convolutional} and \citet{ren2024learning} have explored the integration of visual prompts with robust visual perception and deep metric learning.

Building on metric learning, our work examines the interactions between visual prompts, visual tokens, and class tokens within ViTs. We bridge the gap between traditional metric learning techniques and modern visual prompt tuning methods, offering a more principled way to optimize prompt-based transfer learning.

%%%% =============================== Preliminary & Methodology =========================================

\section{Methodology} \label{sec:method}

\subsection{Preliminary} \label{sec:preliminary}

\textbf{The Vision Transformer (ViT)} \citep{vit_2020} is a fundamental model architecture that applies the original Transformer model \citep{vaswani2017attention} to computer vision tasks. Given an input image $\mathbf{I} \in \mathbb{R}^{H \times W \times C}$, ViT divides it into a sequence of $N$ flattened 2D patches, which are then linearly projected into a $D$-dimensional embedding space. A learnable [CLS] (Class) token $\mathbf{x}_\text{cls} \in \mathbb{R}^D$ is prepended to the patch embeddings, serving as a global representation for classification tasks. The resulting sequence of embeddings $\mathbf{X} \in \mathbb{R}^{(N+1) \times D}$ is then passed through $L$ Transformer block layers, where $l \in \{1,\ldots,L\}$ denotes the layer index. Each layer consists of a Multi-Head Self-Attention (MHSA) mechanism defined as $\text{MHSA}(\mathbf{X}^{l}) = \text{Concat}(\mathbf{H}_1, \cdots, \mathbf{H}_h)$, where each head $\mathbf{H}_i$ computes a scaled dot-product attention $\softmax(\frac{\mathbf{Q}\mathbf{K}^T}{\sqrt{d}}\mathbf{V})$ with subspaces of Query ($\mathbf{Q}$), Key ($\mathbf{K}$), and Value ($\mathbf{V}$) matrices projected from input embedding $\mathbf{X}^{l-1}$ in the previous layer. The final output is the [CLS] token $\mathbf{x}_\text{cls}^L$, used for downstream classification tasks.

\textbf{Visual Prompt Tuning (VPT)} \citep{jia2022visual} presents a promising PEFT technique for ViT that adapts the pre-trained model to downstream tasks by introducing a small set of learnable parameters, namely \textit{visual prompts}. In a specific ViT block layer, a sequence of $M$ learnable prompt tokens $\mathbf{P} = \{\mathbf{p}_1, \dots, \mathbf{p}_M\} \in \mathbb{R}^{M \times D}$ is concatenated with the patch embeddings $\mathbf{X} = \{\mathbf{x}_1, \dots, \mathbf{x}_N\} \in \mathbb{R}^{N \times D}$. \citet{jia2022visual} propose two VPT settings: \textbf{VPT-Shallow} where the prompts are only inserted into the first ViT layer, and \textbf{VPT-Deep} where the prompts are appended into every ViT layer. We follow the \textbf{VPT-Deep} setting since it has a higher capacity and aligns with our proposed method. The resulting sequence of embeddings $[\mathbf{x}_\text{cls}, \mathbf{P}, \mathbf{X}] \in \mathbb{R}^{(M+N+1) \times D}$ is then processed by the next ViT encoder layers. For layer $l$, the output of the $(l+1)$-th layer can be described as:

\begin{equation}
\setlength{\abovedisplayskip}{4pt} 
\setlength{\belowdisplayskip}{4pt} 
\scalebox{0.90}
    { 
$[\mathbf{x}^{l+1}_\text{cls}, [\quad], \mathbf{x}^{l+1}_1 \dots \mathbf{x}^{l+1}_N] = \mathit{BLK}([\mathbf{x}^l_\text{cls}, \mathbf{p}^l_1 \dots \mathbf{p}^l_M, \mathbf{x}^l_1 \dots \mathbf{x}^l_N]),$
   }
\end{equation}

where $\mathbf{p}^l_1 \dots \mathbf{p}^l_M$ are the $M$ prompts in layer $l$, $\mathit{BLK}$ represents the transformer block, and $[\quad]$ represents the position reserved for prompts in the next layer. During fine-tuning, only the visual prompts $\mathbf{P}$ and the linear classification head are updated.

\textbf{Metric Learning (ML)} aims to learn a distance metric that captures semantic similarity between data points. The \textit{Neighborhood Component Analysis} (NCA) \citep{roweis2004nca} encourages learned embeddings to have a higher probability of correct classification by nearest neighbor classifiers. Given a set of $N$ labeled data points $\{(\mathbf{x}_i, y_i)\}_{i=1}^N$, where $\mathbf{x}_i \in \mathbb{R}^D$ is the input feature vector and $y_i \in \{1,\ldots,C\}$ is the corresponding class label from $C$ classes, the NCA objective is:

\begin{equation} 
\setlength{\abovedisplayskip}{4pt} 
\setlength{\belowdisplayskip}{4pt} 
\mathcal{L}_\text{NCA} = -\sum_{i=1}^N \log \frac{\sum_{j \in \mathcal{N}_i} \exp(-D(\mathbf{x}_i, \mathbf{x}_j)/\tau)}{\sum_{k \neq i} \exp(-D (\mathbf{x}_i, \mathbf{x}_k)/\tau)}, 
\end{equation} 

where $\mathcal{N}_i = \{ j \mid y_j = y_i, j \neq i \}$ denotes the set of neighboring points with the same class, $\tau > 0$ is the temperature parameter, and $D(\cdot, \cdot)$ represents the \textit{cosine similarity}: $D(\mathbf{x}_i, \mathbf{x}_j) = \hat{\mathbf{x}}_i \cdot \hat{\mathbf{x}}_j$ where $\hat{\mathbf{x}}=\frac{\mathbf{x}}{\|\mathbf{x}\|_2}$ represents the L2-normalized vector. Following NCA, recent works \citep{teh2020proxynca++, kim2020proxy} introduce learnable class representations $\mathbf{P}=\{\mathbf{p}_i \in \mathbb{R}^D\}_{i=1}^C$, named \textit{proxies}, to represent the $C$ classes. In our work, we propose using prompts in deep layers as proxies for subsets of semantically similar classes.

\begin{figure}[t]
\centering
	{\includegraphics[width = 0.50\textwidth]{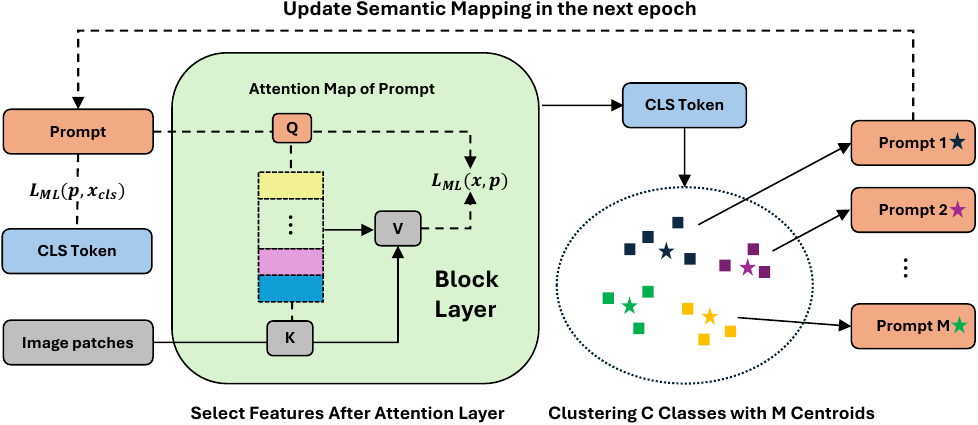}} \hfill
\caption{
\small
\textbf{Framework Overview.} Our method establishes semantic prompt-class mappings by clustering class representations into $M$ clusters ($M$ = number of prompts). Prompts are guided through a metric space using smoothed proxy NCA loss $\mathcal{L}_\text{ML}$ between prompts and attention-based output tokens, enabling each prompt to capture information from its assigned semantic cluster. A similar metric guides [CLS] token-prompt relationships. The semantic mapping updates after each epoch, optimizing prompt distribution to capture fine-grained class-specific features.}

\label{fig:framework}
\end{figure}

%%%% =============================== Methology =================================================================================================================

\subsection{Metric Learning on the Learnable Prompts} \label{sec:method_ml} 

Our objective is to establish a metric in the feature space that quantifies the distance between learnable prompts and either visual tokens or the [CLS] token. We hypothesize that within each layer, a specific prompt should selectively capture information from a subset of relevant classes rather than searching indiscriminately across the entire class space. This targeted approach enables prompts to become more discriminative in their feature extraction while optimizing the [CLS] token's ability to aggregate task-specific information from each class effectively. While this structured information capture may differ from the emergent behavior of standard visual prompts, our empirical results demonstrate that metric learning guidance enhances model transferability, particularly when applied to deeper layers.

For a ViT block $\mathit{BLK}_l$ at layer $l$ ($l > 0$), we regularize the learning of prompts $\mathbf{P}^l$ by constructing a space metric between the normalized prompts $\hat{\mathbf{p}}_k^l$ and normalized visual tokens $\hat{\mathbf{x}}_i^l$. For each prompt $\hat{\mathbf{p}}_k^l \in \mathbf{P}^l$ with assigned class label $y_k$, we aim to satisfy the following constraint for visual token samples $\hat{\mathbf{x}}_i^l$ and $\hat{\mathbf{x}}_j^l$ in the same batch with class different labels $y_i$ and $y_j$ respectively, where $\hat{\mathbf{x}}_i^l$ shares the same class label $y_i=y_k$ with $\hat{\mathbf{p}}_k^l$:

\begin{equation}
\setlength{\abovedisplayskip}{4pt} 
\setlength{\belowdisplayskip}{4pt} 
\hat{\mathbf{p}}_k^l \cdot \hat{\mathbf{x}}_i^l - \delta \geq \hat{\mathbf{p}}_k^l \cdot \hat{\mathbf{x}}_j^l + \delta \quad \forall i, j, k, y_k=y_i \neq y_j,
\label{eq:constraint_normalized}
\end{equation}

where $\cdot$ denotes the dot product and $\delta > 0$ is the pre-defined margin. This constraint ensures that the cosine similarity between a prompt and tokens of the same class is greater than the similarity with tokens of different classes. Since cosine similarity naturally aligns with attention map comparison between Query and Key vectors, we argue that pairs $(\mathbf{p}_k, \mathbf{x}_i)$ that are closer in the spherical space will have higher probability of matching in the optimized attention map.

To efficiently build a metric space satisfying this constraint, we adopt the ML loss from \citet{kim2020proxy}, comparing learnable prompts with visual tokens using smoothed NCA loss (Proxy-Anchor loss). Our metric guidance objective between visual tokens $\mathbf{X}^l$ and prompts $\mathbf{P}^l$ is:

\begin{equation}
\label{eq:loss_pa}
    \scalebox{0.90}
    {
    $\begin{aligned}
    \mathcal{L}_\text{ML}(\mathbf{X}, \mathbf{P}) = & \frac{1}{|\mathcal{P}^{+}|} \sum_{\mathbf{p}_k \in \mathcal{P}^{+}}\left[\underset{\mathbf{x}_i \in \mathcal{X}_p^{+}}{\operatorname{LSE}_0^{+}}\left(-\left(\hat{\mathbf{p}}_k \cdot \hat{\mathbf{x}}_i - \delta\right)/\tau\right)\right] + \\
    & \frac{1}{|\mathcal{P}|} \sum_{\mathbf{p}_k \in \mathcal{P}}\left[\underset{\mathbf{x}_j \in \mathcal{X}_p^{-}}{\operatorname{LSE}_0^{+}} \left(\left(\hat{\mathbf{p}}_k \cdot \hat{\mathbf{x}}_j + \delta\right)/\tau\right)\right],
    \end{aligned}$
    }
\end{equation}

where $\operatorname{LSE}_0^{+}(x) = \log(1+ \sum_{i=1}^N e^{x_i})$ is the smoothed LogSumExp with first argument set to $1$, $\mathcal{P}$ denotes the set of all prompts, $\mathcal{P}^{+}$ denotes the set of positive prompts where same-class data exists in the minibatch, $\mathcal{X}_{p}^{+}$ denotes the set of visual tokens with the same label as the selected prompt $\mathbf{p}$, and $\mathcal{X}_{p}^{-}$ is its complement set. In practice, we found that comparing the projected Query vector $\mathbf{Q} = \mathbf{P}^l\mathbf{W}_{Q}^l$ yields better performance, where $\mathbf{W}_{Q}^l \in \mathbb{R}^{D \times D}$ is the Query projection matrix at layer $l$.

We also propose a similar loss $\mathcal{L}_\text{ML}(\mathbf{P}, \mathbf{x}_\text{cls})$ that pulls the [CLS] token closer to corresponding prompts while pushing it away from prompts of different classes. The overall loss becomes:
\begin{equation}
\setlength{\abovedisplayskip}{5pt} 
\setlength{\belowdisplayskip}{5pt} 
\mathcal{L} = \mathcal{L}_\text{CE} + \beta \mathcal{L}_\text{ML}(\mathbf{X}, \mathbf{P}) + \lambda \mathcal{L}_\text{ML}(\mathbf{P}, \mathbf{x}_\text{cls}),
\end{equation}
where $\beta, \lambda > 0$ are hyperparameters. By jointly optimizing both metric learning terms, our method encourages prompts to capture class-specific information and aligns the [CLS] token with relevant prompts.

\subsection{Projection and Saliency Patch Selection}
\label{sec:saliency}

To ensure prompts effectively focus on critical image information while filtering out false positive visual tokens, we propose selecting saliency information from visual tokens as positive and negative samples for prompt comparison in $\mathcal{L}_\text{ML}(\mathbf{X}, \mathbf{P})$. While extracting saliency patches directly from attention maps is straightforward, it can be computationally intensive, especially with optimized attention mechanisms like Flash Attention. Instead, as shown in Figure~\ref{fig:framework}, we use the output representation immediately following the attention layer. The output representation $\mathbf{X}^l = \text{MHSA}(\mathbf{X}^l) \in \mathbb{R}^{N \times D}$ then concatenates representations from each head, serving as a saliency aggregation of visual tokens.

\subsection{Dynamically Mapping Classes and Prompts}
\label{sec:mapping}

We set $M$ learnable prompts in each layer where $M \ll C$ to avoid optimization difficulties and unequal training opportunities. We develop a semantic mapping strategy to map $C$ classes to $M$ prompts. Before training, we run an additional epoch to obtain class representations $\mathbf{S} \in \mathbb{R}^{C \times D}$ by mean-pooling the [CLS] token for each class using the pre-trained ViT. We then use k-means clustering to group these representations into $M$ clusters, assigning classes to prompts based on cluster membership, as shown in Figure~\ref{fig:framework}.

To maintain semantic mapping accuracy, we update the mapping after each epoch. During training, we collect and calculate updated class representations $\mathbf{S}$, then update k-means using previous epoch centroids as initialization to adjust the class-prompt mapping.

\subsection{Efficient Bias Tuning} 
\label{sec:bias}

To further improve the flexibility in the distribution of visual tokens, we investigate the partial release of ViT backbone bias terms as suggested by \citet{zaken2021bitfit}. We found that fine-tuning performance significantly improves when bias terms are partially enabled with our metric guidance loss. The most efficient components are the bias terms $\mathbf{b}_K, \mathbf{b}_V \in \mathbb{R}^D$ in the Key and Value linear projections of the self-attention mechanism (Figure~\ref{fig:impact_1:efficient_bias}). This observation aligns with findings from \citet{zaken2021bitfit} and \citet{cordonnier2020multi}. Partially allowing bias terms to adapt provides additional flexibility in adjusting visual token distributions and capturing task-specific information under metric guidance.

\newtheorem{theorem}{Theorem}
\section{Technical Discussion} \label{sec:discussion}

\subsection{Connection Between Similarity and Attention}

In this section, we analyze how changes in token similarity influence attention weights through gradient updates. Specifically, we examine how a small change in the similarity between a prompt $\mathbf{p}$ and a visual token $\mathbf{x}_i$ affects the corresponding attention weight $a_i$. Let $\Delta \mathbf{p}$ represent a small perturbation that brings $\mathbf{p}$ closer to $\mathbf{x}_i$ in the embedding space. We formalize this relationship in the following theorem:

\begin{theorem}[Attention-Similarity Relationship]
For an attention weight perturbation $\Delta a_i$ computed using the softmax function, the following approximation holds:
\begin{equation}
\Delta a_i \approx a_i (1 - a_i) \Delta s_i,
\end{equation}
where $\Delta s_i$ represents the change in attention score $s_i$, given by $\Delta s_i = \frac{\Delta \mathbf{p}^\top \mathbf{x}_i}{\sqrt{d}}$, and $d$ is the dimension of the attention head.
\end{theorem}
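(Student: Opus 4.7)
The plan is to treat this as a straightforward first-order perturbation analysis of the softmax function, together with a linearization of the scaled dot-product attention score. Since $a_i = \softmax(s_i) = e^{s_i} / \sum_{j} e^{s_j}$, I would begin by computing the Jacobian entries $\partial a_i / \partial s_j$ and specializing to the diagonal term. A short direct calculation (quotient rule on the softmax expression) gives $\partial a_i/\partial s_i = a_i - a_i^2 = a_i(1-a_i)$ and $\partial a_i/\partial s_j = -a_i a_j$ for $j \neq i$. This identity is the analytical core of the theorem.

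Next, I would write the first-order Taylor expansion
\begin{equation}
\Delta a_i \;=\; \sum_j \frac{\partial a_i}{\partial s_j}\,\Delta s_j + O(\|\Delta s\|^2),
\end{equation}
and argue that the perturbation $\Delta \mathbf{p}$ is assumed to act only on the similarity with the particular token $\mathbf{x}_i$ targeted by $\mathcal{L}_\text{ML}$, so that $\Delta s_j = 0$ for $j \neq i$ to first order (or, equivalently, that the induced cross-terms $-a_i a_j \Delta s_j$ are subdominant compared with the aligned term, since the metric learning loss is constructed to push $\mathbf{p}$ along the $\mathbf{x}_i$ direction rather than the $\mathbf{x}_j$ directions). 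Substituting $\Delta s_i = \Delta \mathbf{p}^\top \mathbf{x}_i/\sqrt{d}$, which follows by linearity from the attention score definition $s_i = \mathbf{p}^\top \mathbf{x}_i/\sqrt{d}$, yields the claimed approximation
\begin{equation}
\Delta a_i \;\approx\; a_i(1-a_i)\,\Delta s_i \;=\; a_i(1-a_i)\,\frac{\Delta \mathbf{p}^\top \mathbf{x}_i}{\sqrt{d}}.
\end{equation}

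The main obstacle I anticipate is the justification step in the middle paragraph: a fully honest derivation must explain why the off-diagonal softmax Jacobian contributions can be dropped, since in reality a perturbation $\Delta \mathbf{p}$ alters \emph{every} score $s_j = \mathbf{p}^\top \mathbf{x}_j/\sqrt{d}$ simultaneously. I would address this either by restricting attention to the controlled setting where $\Delta \mathbf{p}$ is taken parallel to $\mathbf{x}_i$ and the $\mathbf{x}_j$ for $j\neq i$ are approximately orthogonal to this direction (consistent with the metric learning pushes of Eq.~(\ref{eq:constraint_normalized})), or by noting that the theorem is stated as an approximation and keeping only the dominant $a_i(1-a_i)$ term, which captures the sign and magnitude of the coupling between similarity change and attention change that the paper wishes to highlight. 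Everything else reduces to routine calculus with no deeper difficulty.
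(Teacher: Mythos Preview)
Your proposal is correct and follows essentially the same route as the paper: compute the softmax Jacobian $\partial a_i/\partial s_j = a_i(\delta_{ij}-a_j)$, apply a first-order Taylor expansion, drop the off-diagonal contribution $-a_i\sum_{j\neq i}a_j\,\Delta s_j$, and substitute $\Delta s_i = \Delta\mathbf{p}^\top\mathbf{x}_i/\sqrt{d}$. The only minor difference is in how the cross-terms are dismissed: the paper argues that $\sum_{j\neq i} a_j\,\Delta s_j \approx 0$ either because the $\Delta s_j$ are small and uncorrelated (so they average out) or because $a_i$ dominates and the $a_j$ are small, whereas you invoke approximate orthogonality of $\mathbf{x}_j$ to the update direction $\Delta\mathbf{p}\parallel\mathbf{x}_i$; both are heuristic and you rightly flag this as the soft spot in the derivation.
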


This approximation reveals that a positive gradient change in attention weight ($\Delta a_i > 0$) occurs when:
\begin{equation}
\setlength{\abovedisplayskip}{2pt} 
\setlength{\belowdisplayskip}{2pt} 
a_i (1 - a_i) \Delta s_i = a_i (1 - a_i) \frac{\Delta \mathbf{p}^\top \mathbf{x}_i}{\sqrt{d}} > 0
\end{equation}
This condition is satisfied when $\mathbf{p}$ moves closer to $\mathbf{x}_i$ in the embedding space. Conversely, when $\mathbf{p}$ moves away from $\mathbf{x}_i$, $\Delta a_i$ decreases. This theorem establishes a direct connection between token similarity and attention mapping, demonstrating how our metric learning guidance influences attention through token distribution. The complete proof is provided in the Appendix.

\subsection{Analysis of Guided Attention Maps} 
\label{sec:attention}
\begin{figure*}[htpb]
\begin{subfigure}[b]{0.65\textwidth}
    \centering
	{\includegraphics[width=\textwidth]{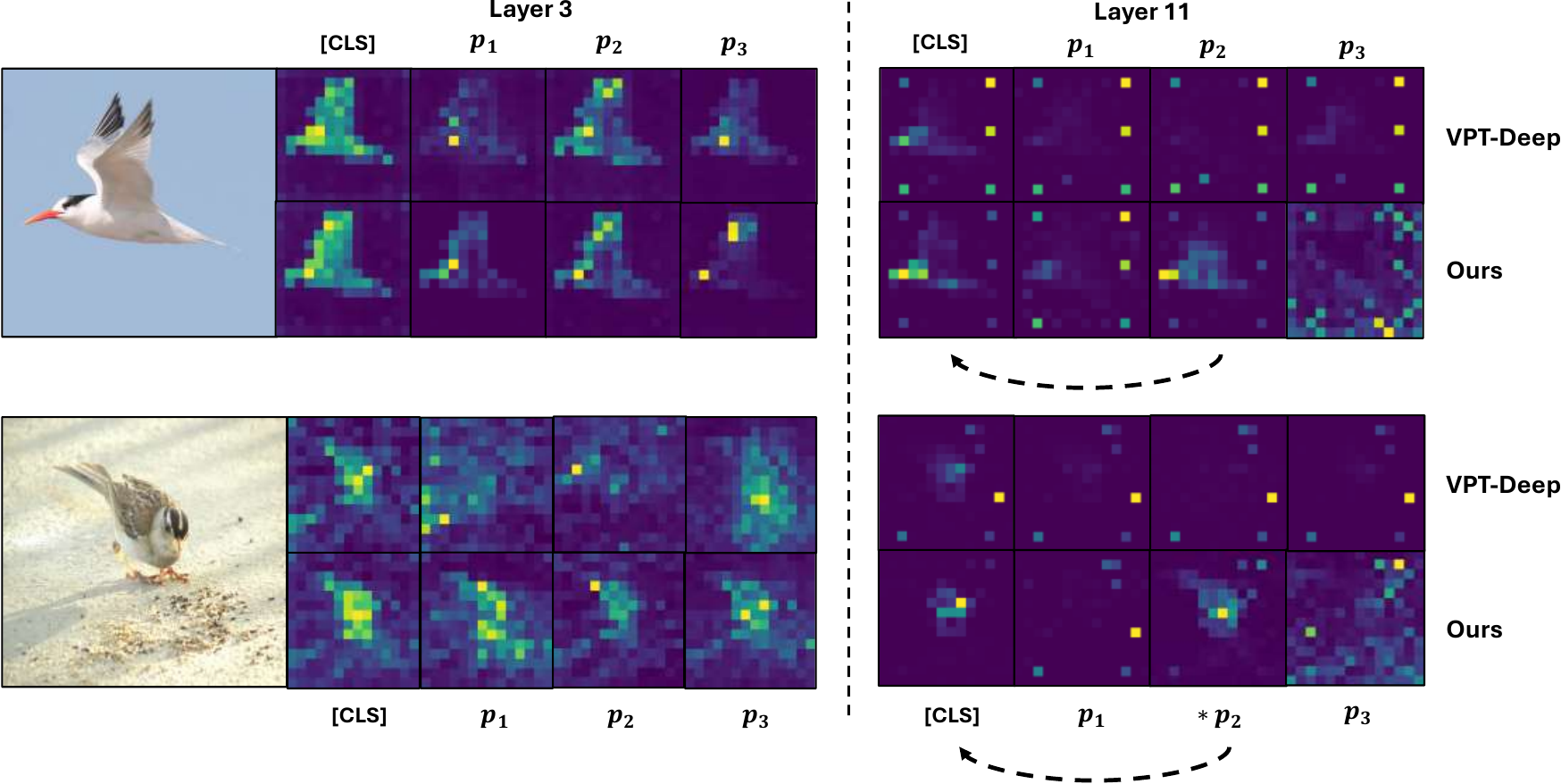}} 
    \caption{}
    \label{fig:attn_map:map}
\end{subfigure}
\hfill
\begin{subfigure}[b]{0.30\textwidth}
    \centering
	{\includegraphics[width=\textwidth]{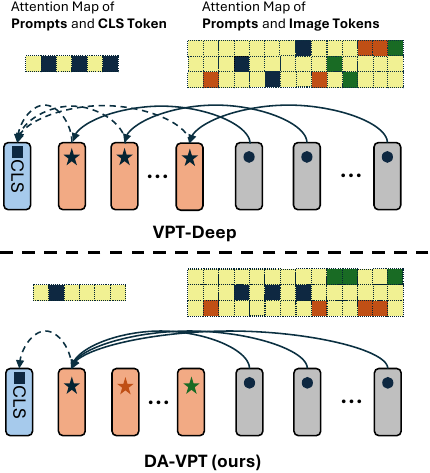}} 
    \caption{}
    \label{fig:attn_map:intro}
\end{subfigure}

\vspace{-10pt}
\caption{
\small
Comparison of attention patterns between VPT-Deep and our method on CUB dataset samples. \textbf{(a)} Attention maps in shallow (layer 3) and deep (layer 11) layers, showing CLS token and sampled prompt attention patterns. In layer 11, $\ast p$ indicates the prompt guided as positive to the CLS token, while others represent negative prompts. Additional visualization examples are provided in Appendix. \textbf{(b)} Information flow comparison between baseline VPT-Deep and our DA-VPT. Our method enables selected prompts to aggregate and transfer fine-grained details from same-class visual tokens to the [CLS] token more effectively.
}
\vspace{-5pt}
\label{fig:attn_map}
\end{figure*}

To further analyze the impact of our metric guidance loss on fine-tuning, we visualize attention maps between visual tokens and prompts across different layers (Figure~\ref{fig:attn_map:map}). Our analysis reveals distinct patterns:

\textbf{(1) In Shallow Layers}:
\begin{itemize}
    \item Both VPT-Deep and our method show prompts attending to different object subregions
    \item Our method demonstrates enhanced diversity and precision in information capture
\end{itemize}

\textbf{(2) In Deep Layers}:
\begin{itemize}
    \item Attention maps become sparser as token representations become more abstract
    \item Standard VPT-Deep prompts show limited information selection compared to the [CLS] token
    \item Our positive prompt ($\ast \mathbf{p}$) successfully identifies informative patches that are subsequently selected by the [CLS] token
\end{itemize}

These visualizations demonstrate that positively labeled prompts serve as effective "bridges" for semantic information flow to the [CLS] token in deep layers. As shown in Figure~\ref{fig:attn_map:intro}, our DA-VPT enables prompts to aggregate discriminative features from same-class data, resulting in more fine-grained attention patterns compared to the baseline. This enhanced information routing significantly improves the model's discriminative capability during fine-tuning.

\textbf{Artifact Consideration:} Recent work by \citet{darcet2023vision} revealed the existence of attention artifacts in vision transformers, which we also observe in our prompt attention maps (Figure~\ref{fig:attn_map:map}). While they demonstrate that training with \textit{registers} (analogous to learnable prompts) from scratch can eliminate these artifacts, both standard VPT and our method exhibit them due to prompt introduction during fine-tuning rather than pre-training. Nevertheless, our comparative analysis shows that the proposed guidance mechanism reduces both the frequency and spread of artifacts, constraining them more effectively within semantic object boundaries. While our method achieves better alignment between attention patterns and object semantics, the nature and impact of these artifacts on model performance presents an intriguing avenue for future investigation.

\subsection{Compatibility with Metric Learning Methods}

Our selection of the \textit{Proxy-Anchor} method \citep{kim2020proxy} is motivated by its natural alignment with our hypothesized role of visual prompts, where both proxies and prompts serve as class representatives and comparative anchors for data tokens. Alternative metric learning approaches, such as \textit{Proxy-NCA} \citep{teh2020proxynca++} and conventional \textit{triplet loss} \citep{cheng2016person, hermans2017defense}, treat all representations as equal data points. These approaches are less suitable for our framework because the significant disparity between the number of visual prompts ($M$) and data tokens ($N$, where $M \ll N$) creates an \textit{unbalanced optimization problem}. Our empirical studies further confirm this theoretical intuition: attempting to fine-tune with conventional metric learning methods leads to training instability, whereas the Proxy-Anchor formulation maintains stable optimization by explicitly accounting for the asymmetric nature of prompt-token relationships.

%%%% ================================ Experiments =============================================
\section{Experiments}
\label{sec:exp}
\subsection{Experimental Setup}
\label{sec:setup}

\textbf{Datasets.} We evaluate our method on three types of visual transfer learning tasks. For visual recognition, we use the Fine-Grained Visual Classification (FGVC) benchmark \citep{jia2022visual} comprising 5 datasets. For few-shot transfer learning, we employ the Visual Task Adaptation Benchmark (VTAB-1K) \citep{zhai2019large} containing 19 datasets. Additionally, we evaluate dense prediction tasks on ADE20K \citep{zhou2019semantic} and PASCAL Context \citep{mottaghi2014role}. Detailed dataset characteristics and experimental settings are provided in the Appendix.

\textbf{Model Architecture.} We employ Vision Transformer (ViT) \citep{vit_2020} as our backbone, using the base model \textbf{ViT-B} (12 layers) for visual classification tasks and the large model \textbf{ViT-L} (24 layers) for semantic segmentation. To evaluate generalization, we initialize the backbone using either supervised pre-training on ImageNet-21K \citep{deng2009imagenet} or self-supervised pre-training on ImageNet-1K using methods such as MoCo v3 \citep{chen2021empirical} and MAE \citep{he2022masked}.

\textbf{Method Variants.} We evaluate two versions of our approach. Our primary method, \textbf{DA-VPT}, builds on VPT-Deep \citep{jia2022visual} while incorporating our proposed metric learning losses $\mathcal{L}_\text{ML}(\mathbf{X}, \mathbf{P})$ and $\mathcal{L}_\text{ML}(\mathbf{P}, \mathbf{x}_\text{cls})$. The enhanced version, \textbf{DA-VPT+}, further incorporates efficient bias tuning as detailed in Section~\ref{sec:bias}.

\textbf{Implementation Details.} For all experiments, we conduct extensive hyperparameter optimization, including learning rate, parameter decays, and the number of visual prompts for layers both with and without our metric learning guidance. Through this empirical investigation, we identified that the optimal number of prompts for most downstream tasks is approximately 20. For metric learning parameters, we adopt the Proxy-Anchor defaults with margin $\delta = 32$ and temperature $\tau = 10$. Extended experimental details, including hyperparameter studies and ablation analyses, are provided in the Appendix.

%%%% =========================================================================================================================================================

\subsection{Result Comparison with the State-of-the-Art}
\label{sec:result}

% tab:compare_all
\begin{table}[ht]
\centering
\small
\resizebox{0.99\linewidth}{!}{
\setlength\tabcolsep{2.0pt}
    \footnotesize
    \begin{tabular}{|l|c|c|cccc|}
        \hline
        ~ & Mean & FGVC & \multicolumn{4}{c|}{VTAB-1K}  \\
        Methods & Param (M) &Mean  Acc (5) & Natural (7) & Specialized (4) & Structured (8) & Mean Acc \\
        \hline
        \multicolumn{7}{|c|}{ViT-B with Supervised pretrained on ImageNet-21k} \\
        \hline
        Full              				& 85.98 & 88.54 & 75.88 & 83.36 & 47.64 & 68.96 \\
        VPT-Shallow       		        &  0.11 & 84.62 & 76.81 & 79.68 & 46.98 & 67.82 \\
        VPT-Deep          		        &  0.64 & 89.11 & 78.48 & 82.43 & 54.98 & 71.96 \\
        E2VPT \citep{han20232vpt}  		& 0.33 & 89.22 & 80.01 & 84.43 & 57.39 & 73.94 \\
        DA-VPT (ours) 					& 0.21 & 91.22 & 80.25 & 85.12 & 58.71 & 74.69 \\
        DA-VPT+ (ours)   				&  0.24 & \textbf{91.94} & \textbf{81.98} & \textbf{86.47} & \textbf{59.96} & \textbf{76.14} \\
        \hline 
        \multicolumn{7}{|c|}{ViT-B with MAE pretrained on ImageNet-1K} \\
        \hline
        Full              									& 85.8 & 82.80 & 59.31 & 79.68 & 53.82 & 64.27\\
        VPT-Shallow       									& 0.10 & 57.84 & 39.96 & 69.65 & 27.50 & 45.70 \\
        VPT-Deep          									& 0.20 & 72.02 & 36.02 & 60.61 & 26.57 & 41.73 \\
        GateVPT \citep{yoo2023improving}          	 		& 0.17 & 73.39 & 47.61 & 76.86 & 36.80 & 53.09 \\
        E2VPT \citep{han20232vpt}  							& 0.06 & -- & 59.52 & 77.80 & 44.65 & 60.66 \\
        DA-VPT (ours) 										& 0.20 & 82.17 & 62.14 & 79.14 & 54.31 & 65.19 \\
        DA-VPT+ (ours)   									& 0.22 & \textbf{83.20}  & \textbf{66.59} & \textbf{82.96} & \textbf{59.28}  & \textbf{69.61} \\
        \hline
        \multicolumn{7}{|c|}{ViT-B with MoCo-V3 pretrained on ImageNet-1K} \\
        \hline
        Full               				         & 85.8 & 84.25 & 71.95 & 84.72 & 51.98 & 69.55 \\
        VPT-Shallow        	                    & 0.11 & 79.26 & 67.34 & 82.26 & 37.55 & 62.38 \\
        VPT-Deep           		                   & 0.20 & 83.12 & 70.27 & 83.04 & 42.38 & 65.90 \\
        GateVPT \citep{yoo2023improving}            & 0.17 & 83.00 & 74.84 & 83.38 & 49.10 & 69.11\\
        E2VPT \citep{han20232vpt}  				  & 0.11 & -- & 76.47 & \textbf{87.28} & 54.91 & 72.88 \\
        DA-VPT (ours) 							& 0.21 & 85.02 & 74.24 & 83.21 & 55.23 & 70.90 \\
        DA-VPT+ (ours)   						& 0.24 & \textbf{86.16} & \textbf{76.86} & 84.71 & \textbf{58.98} & \textbf{73.53} \\
        \hline	
    \end{tabular}
    }
\caption{
\small
\textbf{Comparison of Fine-tuning Methods.} Performance evaluation across 24 vision tasks (5 FGVC and 19 VTAB-1K) using supervised ViT and self-supervised backbones (MAE~\citep{he2022masked}, MoCo-v3~\citep{chen2021empirical}). Detailed per-task results for VTAB-1K are provided in Appendix.
}
\label{tab:compare_all}
\end{table}

We evaluate our method against existing VPT-based and recent related approaches across 24 vision tasks. As shown in Table~\ref{tab:compare_all}, our DA-VPT+ consistently achieves superior performance over VPT-related methods on both supervised ViT and self-supervised backbones. On ViT-B, DA-VPT+ improves over the VPT-Deep baseline by 2.83 and 4.18 percentage points (pp) on FGVC and VTAB-1K, respectively, and outperforms E2VPT by 2.72 pp and 2.20 pp on these tasks. Notably, even without bias tuning, DA-VPT maintains strong results across major benchmarks. The improvements are particularly pronounced with self-supervised backbones, where our method also surpasses full fine-tuning on all backbones while using fewer parameters. These results highlight the effectiveness and generalizability of our approach across diverse downstream tasks compared to other VPT-based methods.

% seg results and analyze

\begin{table}[htbp]
    \centering
    \small
    \resizebox{0.95\linewidth}{!}{
    \begin{tabular}{|c|c|c|c|c|c|}
    \hline
    \multirow{2}{*}{Method} &  \multirow{2}{*}{\#Param} & \multicolumn{2}{c|}{ADE20K} & \multicolumn{2}{c|}{PASCAL Context} \\ \cline{3-6} 
                                              &          &    mIoU-SS & mIoU-Ms & mIoU-SS & mIoU-Ms \\ \hline
    Full-Tuning                               &  317.3M  & 47.60 & 49.18 & 53.69 & 55.21 \\ 
    Linear                                    &  13.1M   & 38.09 & 39.16 & 46.06 & 48.13 \\ 
    Bias                                      &  13.2M   & 43.61 & 45.73 & 45.15 & 46.47 \\ \hline
    VPT (baseline)                            &  13.6M   & 44.08 & 46.01 & 49.51 & 50.46 \\ 
    SPT-LoRA \citep{he2023sensitivity}         &  14.6M   & 45.40 & 47.50  & --  & -- \\ 
    SPT-Adapter \citep{he2023sensitivity}      &  14.6M   & 45.20 & 47.20 & -- & -- \\
    DA-VPT (ours)                             &  13.6M  & 45.10  & 47.07 & 50.15  & 51.04 \\
    DA-VPT+ (ours)                            &  13.7M   & \textbf{46.47} & \textbf{47.21} & \textbf{50.40} & \textbf{51.28} \\ \hline
    \end{tabular}
    }
    \caption{
    \small
    \textbf{Results of Semantic Segmentation on ADE20K and PASCAL Context.} We report mIoU-SS (single-scale inference) and mIoU-MS (multi-scale inference). All experiments use the \textbf{ViT-L} backbone pre-trained on ImageNet-21K. The \#Param column indicates the total number of tunable parameters in the entire framework. For SPT \citep{he2023sensitivity}, we report the results from the original paper, while for other settings and our baseline, we provide our reproduced results. We highlight the best results other than the full fine-tuning.}
    \label{tab:seg_result}
\end{table}

Table~\ref{tab:seg_result} demonstrates our proposed methods, DA-VPT and DA-VPT+, achieve significant improvements over existing baselines and recent competitive methods in semantic segmentation tasks on both the ADE20K and PASCAL Context datasets. Compared to classification tasks, dense prediction tasks such as segmentation are much more challenging. Notably, lightweight PEFT methods like Linear or Bias exhibit low efficiency compared to full fine-tuning. In such challenging tasks, our proposed DA-VPT+ still achieves comparable performance while using only 4.3\% of the tunable parameters, demonstrating both high parameter efficiency and effectiveness across both datasets.

\label{sec:sota}

\begin{table}
\centering
\resizebox{0.99\linewidth}{!}{
\begin{tabular}{|l|ccccc|cc|r|}
\toprule
\multirow{2}{*}{\diagbox{Method}{Dataset}} &CUB-200 & NABirds & Oxford  & Stanford  &  Stanford & Mean & Mean \\
 & -2011 &  & Flowers & Dogs & Cars & Acc (\%) & Params (M) \\
\midrule
Full fine-tuning \citep{jia2022visual} & 87.3 & 82.7 & 98.8 & 89.4 & 84.5 & 88.54 & 85.98 \\
Linear Probing \citep{jia2022visual} & 85.3 & 75.9 & 97.9 & 86.2 & 51.3 & 79.32 & \textbf{0.18} \\ 
\midrule
Adapter \citep{adapter19} & 87.1 & 84.3 & 98.5 & 89.8 & 68.6 & 85.67 & 0.41 \\
Bias \citep{zaken2021bitfit} & 88.4 & 84.2 & 98.8 & 91.2 & 79.4 & 88.41 & 0.28 \\
AdaptFormer \citep{adaptformer22} & 87.4 & 84.8 & 99.0 & 90.7 & 81.0 & 88.58 & 1.54 \\
VPT-Shallow \citep{jia2022visual} & 86.7 & 78.8 & 98.4 & 90.7 & 68.7 & 84.62 & 0.25 \\
VPT-Deep \citep{jia2022visual} & 88.5 & 84.2 & 99.0 & 90.2 & 83.6 & 89.11 & 0.85 \\
SSF \citep{lian2022scaling} & 89.5 & 85.7 & 99.6 & 89.6 & 89.2 & 90.72 & 0.39 \\

SNF \citep{wang2023adapting} & 90.2 & 87.4 & 99.7 & 89.5 & 86.9 & 90.74 & 0.25 \\
MP\citep{gao2023tuning} & 89.3 & 84.9& 99.6 & 89.5 & 83.6 & 89.38 & 1.20 \\
E2VPT \citep{han20232vpt} & 89.1 & 84.6 & 99.1 & 90.5 & 82.8 & 89.22 & 0.65 \\
MoSA \citep{zhang2023mosa} & 89.3 & 85.7 & 99.2 & \textbf{91.9} & 83.4 & 89.90 & 1.54 \\

\midrule
VPT (Baseline) & 88.6 & 85.7 & 99.2 & 89.0 & 87.4 & 90.14 & 0.36 \\
DA-VPT (Ours) & 90.2 & 87.4 & 99.4 & 89.4 & 89.7 & 91.22 & 0.30 \\
DA-VPT+ (Ours) & \textbf{90.8} & \textbf{88.3} & \textbf{99.8} & 89.8 & \textbf{91.0} & \textbf{91.94} & 0.32 \\
\bottomrule
\end{tabular}
}
\caption{
\small
\textbf{Comparison of various fine-tuning methods on different downstream tasks.} The ViT-B model pre-trained on ImageNet-21K is used as basic backbone. Top-1 accuracy (\%) is reported and the best result is in \textbf{bold}.
}

\label{tab:fgvc_vtag_vit}
\end{table}

Table~\ref{tab:fgvc_vtag_vit} compares state-of-the-art PEFT methods on FGVC \citep{jia2022visual} using ImageNet-21K pre-trained ViT-B. Our DA-VPT+ achieves the highest mean accuracy of 91.94\% across all datasets, surpassing previous SOTA methods SNF \citep{wang2023adapting} and MoSA \citep{zhang2023mosa} on FGVC. Notable improvements include gains of 0.6 and 1.8 percentage points on CUB and Cars datasets respectively. Both DA-VPT and DA-VPT+ outperform the VPT baseline and full fine-tuning with significant margin while using fewer parameters, demonstrating superior accuracy-efficiency trade-off compared to full fine-tuning and existing PEFT methods.

%% ========================== ablation study ============================================

\subsection{Ablation Studies and Discussion}
\label{sec:ablation}
\subsubsection{Ablation Study}

\begin{table*}[ht]

\caption{\small \textbf{Ablation study on different components in our DA-VPT on two datasets: CUB-200-2011 in FGVC and \textit{Natural} in VTAB-1k.} For each $\mathcal{L}_\text{ML}$ component, we also search for its optimal hyperparameter. The learnable [CLS] token is combined with Efficient Bias for simplicity. We fixed the number of prompts to 20 for all settings. The latency and memory are tested in the same server with RTX4090 GPU.}

\label{table:ablative_components}
\centering
%\begin{small}
%\tabcolsep=0.20cm
\resizebox{0.80\textwidth}{!}{
\begin{tabular}{|c|c|c|>{\centering\arraybackslash}p{1cm}|>{\centering\arraybackslash}p{2cm}|>{\centering\arraybackslash}p{1cm}|>{\centering\arraybackslash}p{2cm}|>{\centering\arraybackslash}p{1.5cm}|>{\centering\arraybackslash}p{1.5cm}|} 
\hline 
\multicolumn{3}{|c|}{Components of our Techniques} 
& \multicolumn{2}{c|}{VTAB-1k \textit{Natural (7)}} 
& \multicolumn{2}{c|}{FGVC CUB-200} 
& {Latency} 
& {Memory} \\  
\cline{1-7} 
$\mathcal{L}_\text{ML}(\mathbf{x}, \mathbf{p})$  & $\mathcal{L}_\text{ML}(\mathbf{p}, \mathbf{x}_\text{cls})$ & Efficient Bias & Param & Accuracy & Param & Accuracy &  (ms/img) & (GB) \\  \hline
~ 	& ~ & ~ 					& \multirow{4}{1cm}{0.14M (0.16\%)} & 79.45 (base) & \multirow{4}{1cm}{0.20M (0.24\%)} & 88.64 (base) & \B{1.41} & \B{2.41} \\
\checkmark 	& ~ & ~ 			& ~ & 79.47 (+0.02) & ~  & 89.24 (+0.60) & 1.51 & 2.41 \\
~ & 	\checkmark 	&~ 			& ~ & 79.51 (+0.06) & ~  & 89.06 (+0.42) & 1.52 & 2.41 \\
\checkmark &  \checkmark	 & ~	 & ~ & 80.53 (+1.08) & ~  & 89.86 (+1.22) & 1.54& 2.41 \\
\cline{4-7}
~ & ~	& \checkmark &  \multirow{4}{1cm}{0.16M (0.19\%)} & 80.06 (+0.61) &  \multirow{4}{1cm}{0.23M (0.27\%)} & 89.55 (+0.91) & 1.45 & 2.76 \\
~ &  \checkmark	& \checkmark 	& ~ & 81.02 (+1.57) & ~ & 90.41 (+1.77) & 1.53 & 2.76 \\
\checkmark & ~	& \checkmark 	& ~ & 81.50 (+2.05) & ~ & 90.54 (+1.90) & 1.53 & 2.76 \\
\checkmark & 	\checkmark & \checkmark & ~ & \B{81.98 (+2.53)} & ~ & \B{90.89 (+2.25)} & 1.56 & 2.76 \\
\hline
\end{tabular}
}
%\end{small}
\end{table*}

The ablation study demonstrates the individual and collective contributions of each component in our proposed DA-VPT method on the CUB dataset from the FGVC benchmark and the Natural task category from the VTAB-1k benchmark. The metric learning losses, $\mathcal{L}_\text{ML}(\mathbf{x}, \mathbf{p})$ and $\mathcal{L}_\text{ML}(\mathbf{p}, \mathbf{x}_\text{cls})$, lead to accuracy improvements of 1.08 \textit{pp} on VTAB-1k Natural and 1.22 \textit{pp} on CUB over the baseline. The integration of Efficient Bias further enhances the performance, contributing to an additional 0.97 \textit{pp} and 1.03 \textit{pp} improvement on the respective datasets. When all three components are combined, our DA-VPT method achieves the highest performance, with total accuracy improvements of 2.05 \textit{pp} on VTAB-1k Natural and 2.25 \textit{pp} on CUB.

While the incorporation of these components introduces a minimal increase in latency and memory usage, the gained accuracy far outweighs this slight trade-off. Note that the combination of $\mathcal{L}_\text{ML}(\mathbf{x}, \mathbf{p})$, $\mathcal{L}_\text{ML}(\mathbf{p}, \mathbf{x}_\text{cls})$ and Efficient Bias yields substantial improvements with only a modest increase in parameters. This highlights the efficiency of our method in achieving significant performance gains with minimal parameter overhead.

\begin{figure*}[htpb]
\centering
    \begin{subfigure}[t]{0.24\textwidth}
            \includegraphics[width=\textwidth]{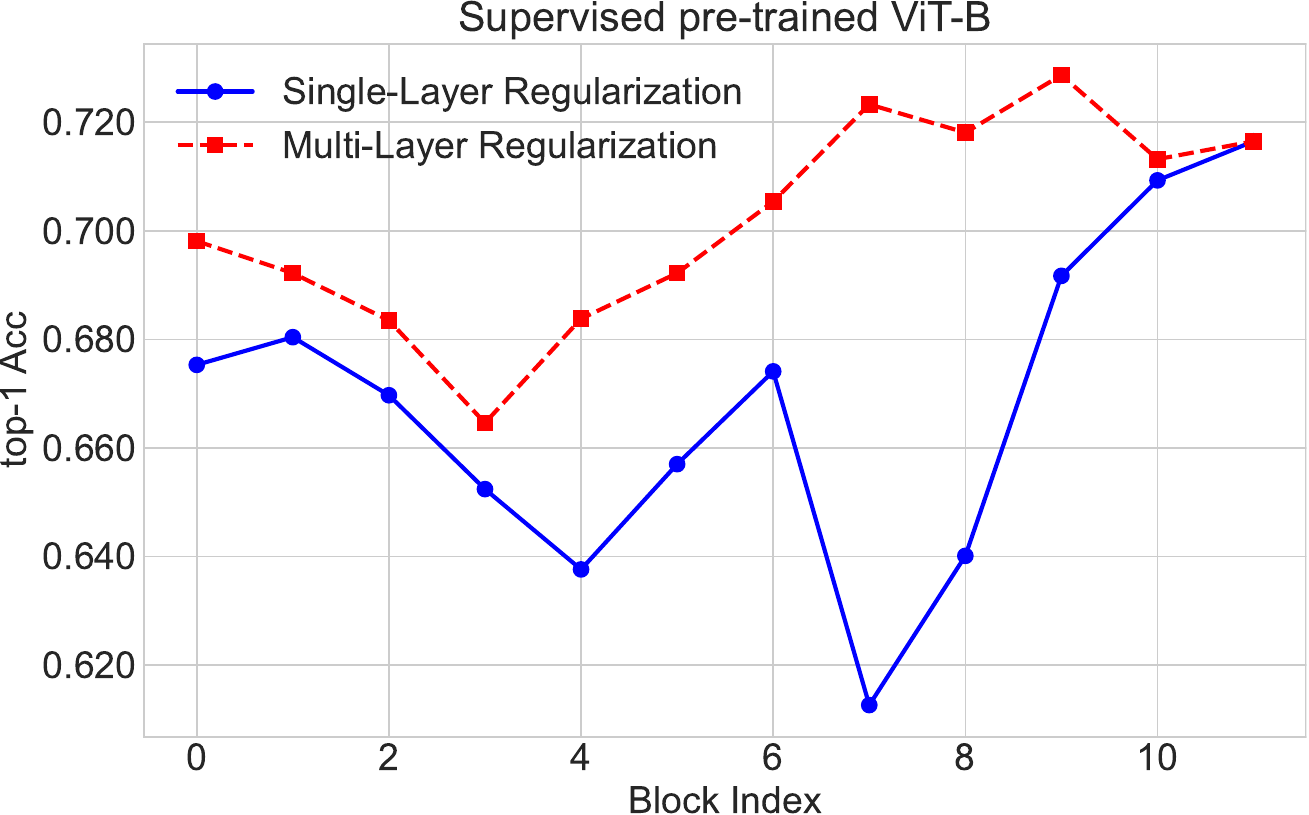}
            \caption{}
            \label{fig:impact_1:num_of_layers}
    \end{subfigure}
    \hspace{-0.5em}
    \begin{subfigure}[t]{0.24\textwidth}
        \includegraphics[width=\textwidth]{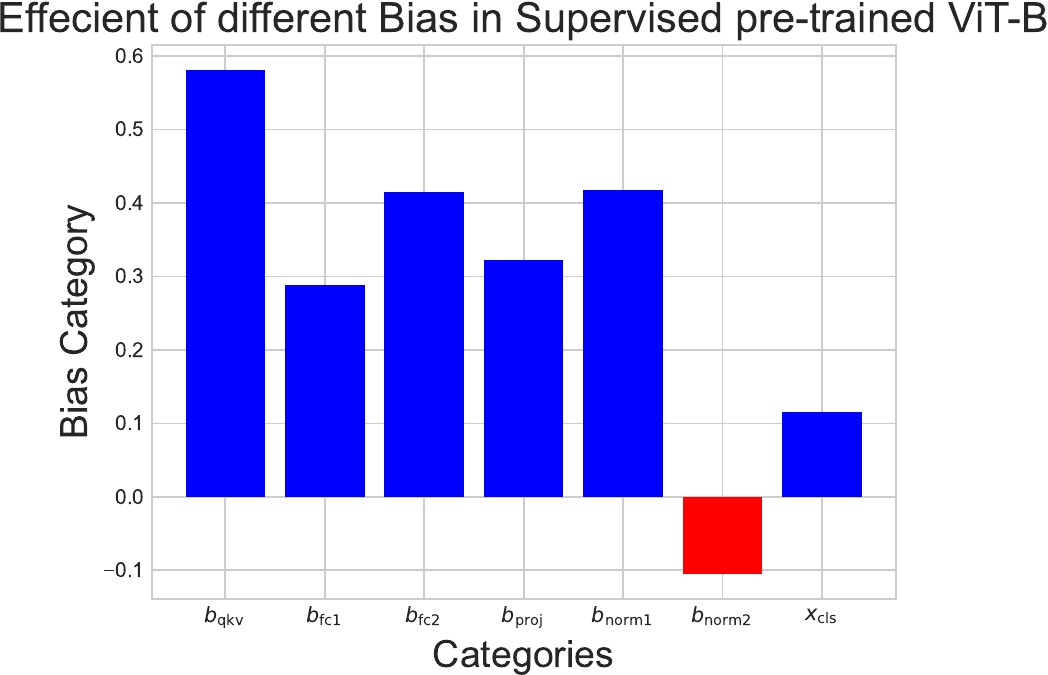}
        \caption{}
        \label{fig:impact_1:efficient_bias}
    \end{subfigure}
    \hspace{-0.5em}
    \begin{subfigure}[t]{0.24\textwidth}
        \includegraphics[width=\textwidth]{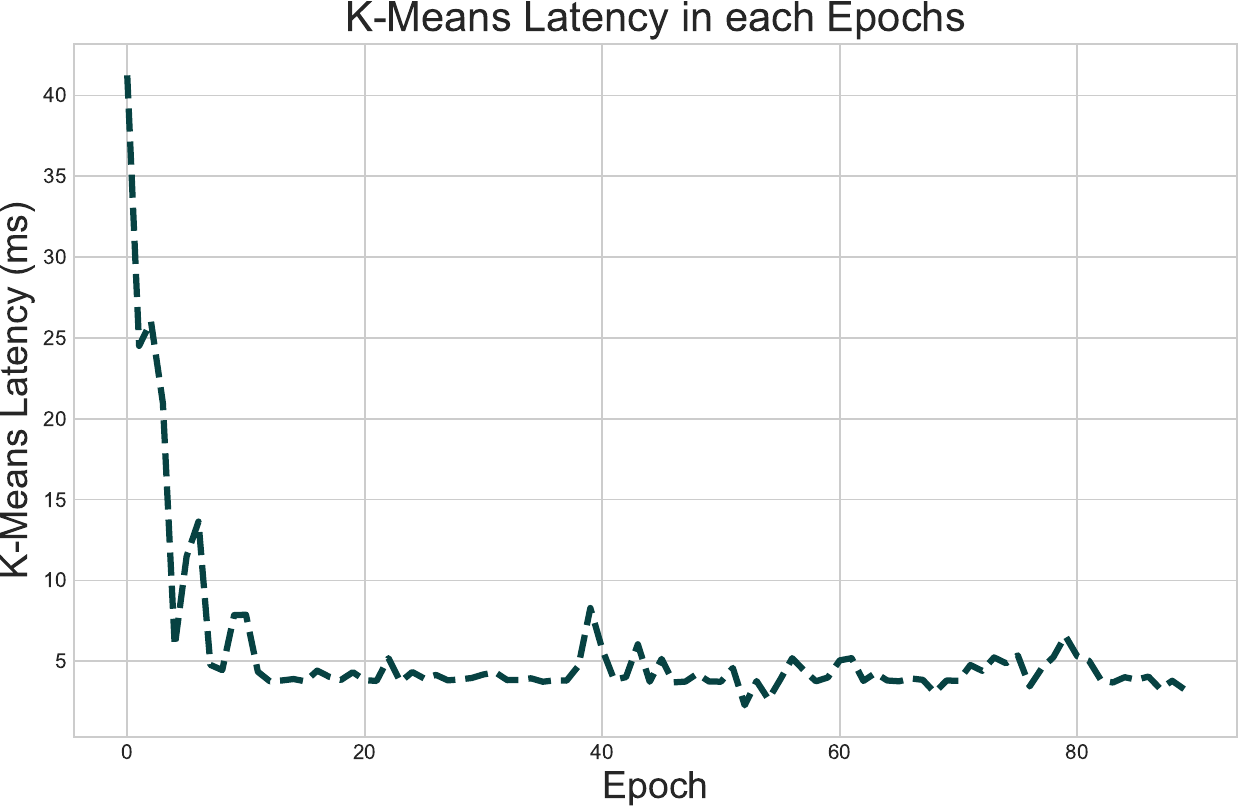}
        \caption{}
        \label{fig:impact_1:k-means}
    \end{subfigure}
    \hspace{-0.5em}
    \begin{subfigure}[t]{0.24\textwidth}
        \includegraphics[width=\textwidth]{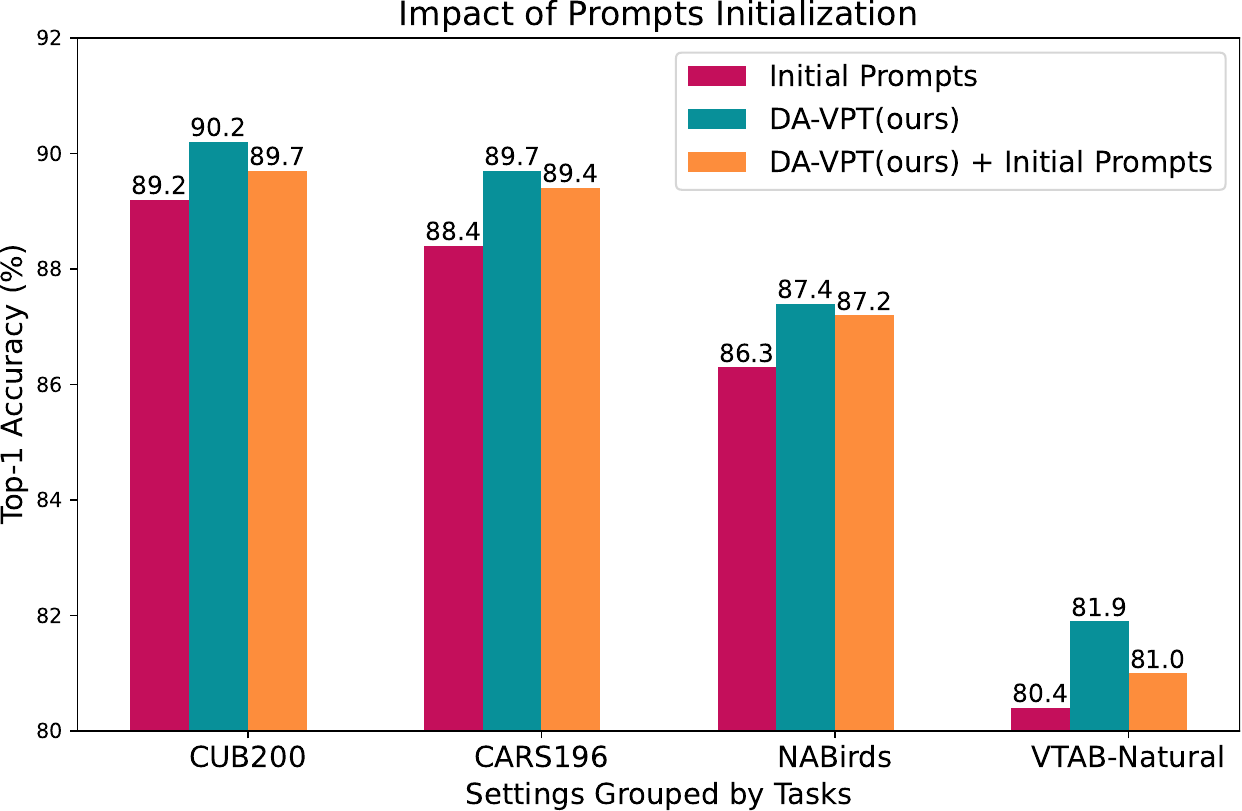}
        \caption{}
        \label{fig:impact_1:impact_of_init}
    \end{subfigure}
    
\caption{ \small  \ref{fig:impact_1:num_of_layers} Illustrates the impact of the number and position of the layers to which the proposed metric learning loss is applied. \ref{fig:impact_1:k-means} This figure shows the latency of the k-means calculation in each epoch. \ref{fig:impact_1:efficient_bias} Illustrates the importance of each category of efficient bias measured on the CUB-200-2011 dataset. \ref{fig:impact_1:impact_of_init} This figure shows the comparison of the performance with or without the prompts initialization with data mean value.
} \label{fig:impact_1}
\end{figure*}

%% ========================== parameter impaction ======================================

\subsubsection{Analysis of Parameter Impacts}
\label{sec:impacts}

\textbf{Layer-wise Impact.} We first examine the effectiveness of our metric learning loss when applied to different layers. As shown by the blue line in Figure~\ref{fig:impact_1:num_of_layers}, applying the loss to the final layer yields optimal results in most cases, likely due to the presence of higher-level semantic features in deeper layers. We further investigate the effect of applying our loss across multiple consecutive layers, represented by the red line, which shows the performance when applying the loss from a specific layer through to the final layer. The impact varies notably across different pre-trained models, with detailed results for MAE and MoCo provided in the Appendix.

\textbf{Efficient Bias Components.} Figure~\ref{fig:impact_1:efficient_bias} demonstrates that specific categories of efficient bias contribute disproportionately to performance improvements. This observation underscores the importance of selective optimization of bias components rather than uniform adjustment across all parameters.

\textbf{Semantic Mapping Updates.} The computational cost of k-means clustering for semantic mapping updates is illustrated in Figure~\ref{fig:impact_1:k-means}. Notably, while the initial epochs incur higher computational overhead, the latency decreases significantly in later epochs as class representations stabilize. This suggests that the computational cost of maintaining dynamic class-prompt mappings becomes negligible as training progresses.

\textbf{Prompt Initialization.} We investigate the impact of prompt initialization by comparing mean value initialization on both baseline VPT and our proposed DA-VPT, following the methodology of \citet{wang2024revisiting}, where prompts are initialized with \textit{mean pooling} values from the dataset at each layer. As shown in Figure~\ref{fig:impact_1:impact_of_init}, our analysis reveals that such initialization actually impedes our method's effectiveness. We attribute this to the increased difficulty in guiding prompts to capture discriminative information when initialized with homogeneous content from the mean value. Additional parameter impact analyses are provided in the Appendix.

\section{Conclusion}
\label{sec:conclusion}

This paper introduces Distribution-Aware Visual Prompt Tuning (DA-VPT), a novel framework that improves visual prompt learning in Vision Transformers (ViT) through semantic metric construction between prompts and image features. Our method guides prompts to serve as effective bridges for semantic information flow between image patches and class tokens via the attention mechanism. Extensive evaluations across 24 visual recognition and 2 segmentation tasks demonstrate that DA-VPT significantly outperforms vanilla VPT and other related methods while using fewer prompts and parameters. Our results highlight the importance of considering the intrinsic connection between visual prompts and data samples and showcase the potential of our approach to enhance the transfer learning capabilities of pre-trained vision models. We believe that our findings can inspire further research on parameter-efficient fine-tuning strategies and contribute to the development of more effective and efficient vision foundation models.

{
    \small
    \bibliographystyle{ieeenat_fullname}
    \bibliography{main}

\begin{thebibliography}{91}
\providecommand{\natexlab}[1]{#1}
\providecommand{\url}[1]{\texttt{#1}}
\expandafter\ifx\csname urlstyle\endcsname\relax
  \providecommand{\doi}[1]{doi: #1}\else
  \providecommand{\doi}{doi: \begingroup \urlstyle{rm}\Url}\fi

\bibitem[Beattie et~al.(2016)Beattie, Leibo, Teplyashin, Ward, Wainwright,
  K{\"u}ttler, Lefrancq, Green, Vald{\'e}s, Sadik, et~al.]{beattie2016deepmind}
Charles Beattie, Joel~Z Leibo, Denis Teplyashin, Tom Ward, Marcus Wainwright,
  Heinrich K{\"u}ttler, Andrew Lefrancq, Simon Green, V{\'\i}ctor Vald{\'e}s,
  Amir Sadik, et~al.
\newblock Deepmind lab.
\newblock \emph{arXiv preprint arXiv:1612.03801}, 2016.

\bibitem[Brown et~al.(2020)Brown, Mann, Ryder, Subbiah, Kaplan, Dhariwal,
  Neelakantan, Shyam, Sastry, Askell, et~al.]{brown2020language}
Tom Brown, Benjamin Mann, Nick Ryder, Melanie Subbiah, Jared~D Kaplan, Prafulla
  Dhariwal, Arvind Neelakantan, Pranav Shyam, Girish Sastry, Amanda Askell,
  et~al.
\newblock Language models are few-shot learners.
\newblock \emph{NeurIPS}, 33:\penalty0 1877--1901, 2020.

\bibitem[Chavan et~al.(2023)Chavan, Liu, Gupta, Xing, and Shen]{chavan2023one}
Arnav Chavan, Zhuang Liu, Deepak Gupta, Eric Xing, and Zhiqiang Shen.
\newblock One-for-all: Generalized lora for parameter-efficient fine-tuning.
\newblock \emph{arXiv preprint arXiv:2306.07967}, 2023.

\bibitem[Chen et~al.(2022)Chen, Ge, Tong, Wang, Song, Wang, and
  Luo]{adaptformer22}
Shoufa Chen, Chongjian Ge, Zhan Tong, Jiangliu Wang, Yibing Song, Jue Wang, and
  Ping Luo.
\newblock Adaptformer: Adapting vision transformers for scalable visual
  recognition.
\newblock \emph{NeurIPS}, 2022.

\bibitem[Chen et~al.(2021)Chen, Xie, and He]{chen2021empirical}
Xinlei Chen, Saining Xie, and Kaiming He.
\newblock An empirical study of training self-supervised vision transformers.
\newblock In \emph{CVPR}, pages 9640--9649, 2021.

\bibitem[Cheng et~al.(2016)Cheng, Gong, Zhou, Wang, and Zheng]{cheng2016person}
De Cheng, Yihong Gong, Sanping Zhou, Jinjun Wang, and Nanning Zheng.
\newblock Person re-identification by multi-channel parts-based cnn with
  improved triplet loss function.
\newblock In \emph{CVPR}, pages 1335--1344, 2016.

\bibitem[Cheng et~al.(2017)Cheng, Han, and Lu]{cheng2017remote}
Gong Cheng, Junwei Han, and Xiaoqiang Lu.
\newblock Remote sensing image scene classification: Benchmark and state of the
  art.
\newblock \emph{Proceedings of the IEEE}, pages 1865--1883, 2017.

\bibitem[Chopra et~al.(2005)Chopra, Hadsell, and LeCun]{chopra2005learning}
Sumit Chopra, Raia Hadsell, and Yann LeCun.
\newblock Learning a similarity metric discriminatively, with application to
  face verification.
\newblock In \emph{CVPR}. IEEE, 2005.

\bibitem[Cimpoi et~al.(2014)Cimpoi, Maji, Kokkinos, Mohamed, and
  Vedaldi]{cimpoi2014describing}
Mircea Cimpoi, Subhransu Maji, Iasonas Kokkinos, Sammy Mohamed, and Andrea
  Vedaldi.
\newblock Describing textures in the wild.
\newblock In \emph{CVPR}, pages 3606--3613, 2014.

\bibitem[Cordonnier et~al.(2020)Cordonnier, Loukas, and
  Jaggi]{cordonnier2020multi}
Jean-Baptiste Cordonnier, Andreas Loukas, and Martin Jaggi.
\newblock Multi-head attention: Collaborate instead of concatenate.
\newblock \emph{arXiv preprint arXiv:2006.16362}, 2020.

\bibitem[Darcet et~al.(2024)Darcet, Oquab, Mairal, and
  Bojanowski]{darcet2023vision}
Timoth{\'e}e Darcet, Maxime Oquab, Julien Mairal, and Piotr Bojanowski.
\newblock Vision transformers need registers.
\newblock \emph{ICLR}, 2024.

\bibitem[Dehghani et~al.(2023)Dehghani, Djolonga, Mustafa, Padlewski, Heek,
  Gilmer, Steiner, Caron, Geirhos, Alabdulmohsin, et~al.]{dehghani2023scaling}
Mostafa Dehghani, Josip Djolonga, Basil Mustafa, Piotr Padlewski, Jonathan
  Heek, Justin Gilmer, Andreas~Peter Steiner, Mathilde Caron, Robert Geirhos,
  Ibrahim Alabdulmohsin, et~al.
\newblock Scaling vision transformers to 22 billion parameters.
\newblock In \emph{ICML}, pages 7480--7512. PMLR, 2023.

\bibitem[Deng et~al.(2009)Deng, Dong, Socher, Li, Li, and
  Fei-Fei]{deng2009imagenet}
Jia Deng, Wei Dong, Richard Socher, Li-Jia Li, Kai Li, and Li Fei-Fei.
\newblock Imagenet: A large-scale hierarchical image database.
\newblock In \emph{CVPR}, pages 248--255. Ieee, 2009.

\bibitem[Dosovitskiy et~al.(2020)Dosovitskiy, Beyer, Kolesnikov, Weissenborn,
  Zhai, Unterthiner, Dehghani, Minderer, Heigold, Gelly, et~al.]{vit_2020}
Alexey Dosovitskiy, Lucas Beyer, Alexander Kolesnikov, Dirk Weissenborn,
  Xiaohua Zhai, Thomas Unterthiner, Mostafa Dehghani, Matthias Minderer, Georg
  Heigold, Sylvain Gelly, et~al.
\newblock An image is worth 16x16 words: Transformers for image recognition at
  scale.
\newblock \emph{arXiv preprint arXiv:2010.11929}, 2020.

\bibitem[Dugas et~al.(2015)Dugas, Jared, and Cukierski]{retinopathy}
Emma Dugas, Jorge Jared, and Will Cukierski.
\newblock Diabetic retinopathy detection, 2015.

\bibitem[Ermolov et~al.(2022)Ermolov, Mirvakhabova, Khrulkov, Sebe, and
  Oseledets]{hyp2022}
Aleksandr Ermolov, Leyla Mirvakhabova, Valentin Khrulkov, Nicu Sebe, and Ivan
  Oseledets.
\newblock Hyperbolic vision transformers: Combining improvements in metric
  learning.
\newblock In \emph{CVPR}, pages 7409--7419, 2022.

\bibitem[Fei-Fei et~al.(2006)Fei-Fei, Fergus, and Perona]{fei2006one}
Li Fei-Fei, Robert Fergus, and Pietro Perona.
\newblock One-shot learning of object categories.
\newblock \emph{IEEE TPAMI}, 28\penalty0 (4):\penalty0 594--611, 2006.

\bibitem[Gao et~al.(2023{\natexlab{a}})Gao, Chen, Zhang, Xiao, and
  Sun]{compositionalvpt2023}
Kaifeng Gao, Long Chen, Hanwang Zhang, Jun Xiao, and Qianru Sun.
\newblock Compositional prompt tuning with motion cues for open-vocabulary
  video relation detection.
\newblock \emph{ICLR}, 2023{\natexlab{a}}.

\bibitem[Gao et~al.(2023{\natexlab{b}})Gao, Wang, Lin, Zhu, Hu, and
  Zhou]{gao2023tuning}
Mingze Gao, Qilong Wang, Zhenyi Lin, Pengfei Zhu, Qinghua Hu, and Jingbo Zhou.
\newblock Tuning pre-trained model via moment probing.
\newblock In \emph{CVPR}, pages 11803--11813, 2023{\natexlab{b}}.

\bibitem[Gao et~al.(2022)Gao, Shi, Zhu, Wang, Tang, Zhou, Li, and
  Metaxas]{gao2022visual}
Yunhe Gao, Xingjian Shi, Yi Zhu, Hao Wang, Zhiqiang Tang, Xiong Zhou, Mu Li,
  and Dimitris~N Metaxas.
\newblock Visual prompt tuning for test-time domain adaptation.
\newblock \emph{arXiv preprint arXiv:2210.04831}, 2022.

\bibitem[Gebru et~al.(2017)Gebru, Krause, Wang, Chen, Deng, and
  Fei-Fei]{gebru2017fine}
Timnit Gebru, Jonathan Krause, Yilun Wang, Duyun Chen, Jia Deng, and Li
  Fei-Fei.
\newblock Fine-grained car detection for visual census estimation.
\newblock In \emph{AAAI}, 2017.

\bibitem[Geiger et~al.(2013)Geiger, Lenz, Stiller, and
  Urtasun]{geiger2013vision}
Andreas Geiger, Philip Lenz, Christoph Stiller, and Raquel Urtasun.
\newblock Vision meets robotics: The kitti dataset.
\newblock \emph{The International Journal of Robotics Research}, pages
  1231--1237, 2013.

\bibitem[Hadsell et~al.(2006)Hadsell, Chopra, and LeCun]{hadsell2006}
Raia Hadsell, Sumit Chopra, and Yann LeCun.
\newblock Dimensionality reduction by learning an invariant mapping.
\newblock In \emph{CVPR}. IEEE, 2006.

\bibitem[Han et~al.(2023)Han, Wang, Cui, Cao, Wang, Qi, and Liu]{han20232vpt}
Cheng Han, Qifan Wang, Yiming Cui, Zhiwen Cao, Wenguan Wang, Siyuan Qi, and
  Dongfang Liu.
\newblock E\^{} 2vpt: An effective and efficient approach for visual prompt
  tuning.
\newblock \emph{arXiv preprint arXiv:2307.13770}, 2023.

\bibitem[He et~al.(2023)He, Cai, Zhang, Tao, and Zhuang]{he2023sensitivity}
Haoyu He, Jianfei Cai, Jing Zhang, Dacheng Tao, and Bohan Zhuang.
\newblock Sensitivity-aware visual parameter-efficient fine-tuning.
\newblock In \emph{CVPR}, pages 11825--11835, 2023.

\bibitem[He et~al.(2020)He, Fan, Wu, Xie, and Girshick]{he2020momentum}
Kaiming He, Haoqi Fan, Yuxin Wu, Saining Xie, and Ross Girshick.
\newblock Momentum contrast for unsupervised visual representation learning.
\newblock In \emph{CVPR}, pages 9729--9738, 2020.

\bibitem[He et~al.(2022{\natexlab{a}})He, Chen, Xie, Li, Doll{\'a}r, and
  Girshick]{he2022masked}
Kaiming He, Xinlei Chen, Saining Xie, Yanghao Li, Piotr Doll{\'a}r, and Ross
  Girshick.
\newblock Masked autoencoders are scalable vision learners.
\newblock In \emph{CVPR}, pages 16000--16009, 2022{\natexlab{a}}.

\bibitem[He et~al.(2022{\natexlab{b}})He, Chen, Xie, Li, Doll{\'a}r, and
  Girshick]{mae_2022}
Kaiming He, Xinlei Chen, Saining Xie, Yanghao Li, Piotr Doll{\'a}r, and Ross
  Girshick.
\newblock Masked autoencoders are scalable vision learners.
\newblock In \emph{CVPR}, pages 16000--16009, 2022{\natexlab{b}}.

\bibitem[Helber et~al.(2019)Helber, Bischke, Dengel, and
  Borth]{helber2019eurosat}
Patrick Helber, Benjamin Bischke, Andreas Dengel, and Damian Borth.
\newblock Eurosat: A novel dataset and deep learning benchmark for land use and
  land cover classification.
\newblock \emph{IEEE Journal of Selected Topics in Applied Earth Observations
  and Remote Sensing}, pages 2217--2226, 2019.

\bibitem[Hermans et~al.(2017)Hermans, Beyer, and Leibe]{hermans2017defense}
Alexander Hermans, Lucas Beyer, and Bastian Leibe.
\newblock In defense of the triplet loss for person re-identification.
\newblock \emph{arXiv preprint arXiv:1703.07737}, 2017.

\bibitem[Houlsby et~al.(2019)Houlsby, Giurgiu, Jastrzebski, Morrone,
  De~Laroussilhe, Gesmundo, Attariyan, and Gelly]{adapter19}
Neil Houlsby, Andrei Giurgiu, Stanislaw Jastrzebski, Bruna Morrone, Quentin
  De~Laroussilhe, Andrea Gesmundo, Mona Attariyan, and Sylvain Gelly.
\newblock Parameter-efficient transfer learning for nlp.
\newblock In \emph{ICML}, pages 2790--2799. PMLR, 2019.

\bibitem[Hu et~al.(2021)Hu, Shen, Wallis, Allen-Zhu, Li, Wang, Wang, and
  Chen]{hu2021lora}
Edward~J Hu, Yelong Shen, Phillip Wallis, Zeyuan Allen-Zhu, Yuanzhi Li, Shean
  Wang, Lu Wang, and Weizhu Chen.
\newblock Lora: Low-rank adaptation of large language models.
\newblock \emph{arXiv preprint arXiv:2106.09685}, 2021.

\bibitem[Jia et~al.(2022)Jia, Tang, Chen, Cardie, Belongie, Hariharan, and
  Lim]{jia2022visual}
Menglin Jia, Luming Tang, Bor-Chun Chen, Claire Cardie, Serge Belongie, Bharath
  Hariharan, and Ser-Nam Lim.
\newblock Visual prompt tuning.
\newblock In \emph{ECCV}, pages 709--727. Springer, 2022.

\bibitem[Johnson et~al.(2017)Johnson, Hariharan, Van Der~Maaten, Fei-Fei,
  Lawrence~Zitnick, and Girshick]{johnson2017clevr}
Justin Johnson, Bharath Hariharan, Laurens Van Der~Maaten, Li Fei-Fei, C
  Lawrence~Zitnick, and Ross Girshick.
\newblock Clevr: A diagnostic dataset for compositional language and elementary
  visual reasoning.
\newblock In \emph{CVPR}, pages 2901--2910, 2017.

\bibitem[Khosla et~al.(2011)Khosla, Jayadevaprakash, Yao, and
  Li]{khosla2011novel}
Aditya Khosla, Nityananda Jayadevaprakash, Bangpeng Yao, and Fei-Fei Li.
\newblock Novel dataset for fine-grained image categorization: Stanford dogs.
\newblock In \emph{Proc. CVPR workshop on fine-grained visual categorization
  (FGVC)}. Citeseer, 2011.

\bibitem[Kim et~al.(2020)Kim, Kim, Cho, and Kwak]{kim2020proxy}
Sungyeon Kim, Dongwon Kim, Minsu Cho, and Suha Kwak.
\newblock Proxy anchor loss for deep metric learning.
\newblock In \emph{CVPR}, pages 3238--3247, 2020.

\bibitem[Kirillov et~al.(2023)Kirillov, Mintun, Ravi, Mao, Rolland, Gustafson,
  Xiao, Whitehead, Berg, Lo, et~al.]{kirillov2023segment}
Alexander Kirillov, Eric Mintun, Nikhila Ravi, Hanzi Mao, Chloe Rolland, Laura
  Gustafson, Tete Xiao, Spencer Whitehead, Alexander~C Berg, Wan-Yen Lo, et~al.
\newblock Segment anything.
\newblock In \emph{CVPR}, pages 4015--4026, 2023.

\bibitem[Kornblith et~al.(2019)Kornblith, Shlens, and Le]{kornblith2019better}
Simon Kornblith, Jonathon Shlens, and Quoc~V Le.
\newblock Do better imagenet models transfer better?
\newblock In \emph{CVPR}, pages 2661--2671, 2019.

\bibitem[Kotovenko et~al.(2023)Kotovenko, Ma, Milbich, and
  Ommer]{kotovenko2023cross}
Dmytro Kotovenko, Pingchuan Ma, Timo Milbich, and Bj{\"o}rn Ommer.
\newblock Cross-image-attention for conditional embeddings in deep metric
  learning.
\newblock In \emph{CVPR}, pages 11070--11081, 2023.

\bibitem[Krizhevsky et~al.(2009)Krizhevsky, Hinton,
  et~al.]{krizhevsky2009learning}
Alex Krizhevsky, Geoffrey Hinton, et~al.
\newblock Learning multiple layers of features from tiny images.
\newblock 2009.

\bibitem[Laradji and Babanezhad(2020)]{laradji2020m}
Issam~H Laradji and Reza Babanezhad.
\newblock M-adda: Unsupervised domain adaptation with deep metric learning.
\newblock \emph{Domain adaptation for visual understanding}, pages 17--31,
  2020.

\bibitem[LeCun et~al.(2004)LeCun, Huang, and Bottou]{lecun2004learning}
Yann LeCun, Fu~Jie Huang, and Leon Bottou.
\newblock Learning methods for generic object recognition with invariance to
  pose and lighting.
\newblock In \emph{CVPR}. IEEE, 2004.

\bibitem[Lester et~al.(2021)Lester, Al-Rfou, and Constant]{lester2021power}
Brian Lester, Rami Al-Rfou, and Noah Constant.
\newblock The power of scale for parameter-efficient prompt tuning.
\newblock \emph{arXiv preprint arXiv:2104.08691}, 2021.

\bibitem[Li and Liang(2021)]{li2021prefix}
Xiang~Lisa Li and Percy Liang.
\newblock Prefix-tuning: Optimizing continuous prompts for generation.
\newblock \emph{arXiv preprint arXiv:2101.00190}, 2021.

\bibitem[Lian et~al.(2022)Lian, Zhou, Feng, and Wang]{lian2022scaling}
Dongze Lian, Daquan Zhou, Jiashi Feng, and Xinchao Wang.
\newblock Scaling \& shifting your features: A new baseline for efficient model
  tuning.
\newblock \emph{Neurips}, 35:\penalty0 109--123, 2022.

\bibitem[Liu et~al.(2023)Liu, Yuan, Fu, Jiang, Hayashi, and Neubig]{liu2023pre}
Pengfei Liu, Weizhe Yuan, Jinlan Fu, Zhengbao Jiang, Hiroaki Hayashi, and
  Graham Neubig.
\newblock Pre-train, prompt, and predict: A systematic survey of prompting
  methods in natural language processing.
\newblock \emph{ACM Computing Surveys}, 55\penalty0 (9):\penalty0 1--35, 2023.

\bibitem[Liu et~al.(2021)Liu, Ji, Fu, Tam, Du, Yang, and Tang]{liu2021p}
Xiao Liu, Kaixuan Ji, Yicheng Fu, Weng~Lam Tam, Zhengxiao Du, Zhilin Yang, and
  Jie Tang.
\newblock P-tuning v2: Prompt tuning can be comparable to fine-tuning
  universally across scales and tasks.
\newblock \emph{arXiv preprint arXiv:2110.07602}, 2021.

\bibitem[Loshchilov and Hutter(2016)]{loshchilov2016sgdr}
Ilya Loshchilov and Frank Hutter.
\newblock Sgdr: Stochastic gradient descent with warm restarts.
\newblock \emph{arXiv preprint arXiv:1608.03983}, 2016.

\bibitem[Loshchilov and Hutter(2017)]{loshchilov2017decoupled}
Ilya Loshchilov and Frank Hutter.
\newblock Decoupled weight decay regularization.
\newblock \emph{arXiv preprint arXiv:1711.05101}, 2017.

\bibitem[Mahajan et~al.(2018)Mahajan, Girshick, Ramanathan, He, Paluri, Li,
  Bharambe, and Van Der~Maaten]{mahajan2018exploring}
Dhruv Mahajan, Ross Girshick, Vignesh Ramanathan, Kaiming He, Manohar Paluri,
  Yixuan Li, Ashwin Bharambe, and Laurens Van Der~Maaten.
\newblock Exploring the limits of weakly supervised pretraining.
\newblock In \emph{ECCV}, pages 181--196, 2018.

\bibitem[Matthey et~al.(2017)Matthey, Higgins, Hassabis, and
  Lerchner]{matthey2017dsprites}
Loic Matthey, Irina Higgins, Demis Hassabis, and Alexander Lerchner.
\newblock dsprites: Disentanglement testing sprites dataset, 2017.

\bibitem[Mottaghi et~al.(2014)Mottaghi, Chen, Liu, Cho, Lee, Fidler, Urtasun,
  and Yuille]{mottaghi2014role}
Roozbeh Mottaghi, Xianjie Chen, Xiaobai Liu, Nam-Gyu Cho, Seong-Whan Lee, Sanja
  Fidler, Raquel Urtasun, and Alan Yuille.
\newblock The role of context for object detection and semantic segmentation in
  the wild.
\newblock In \emph{CVPR}, pages 891--898, 2014.

\bibitem[Movshovitz-Attias et~al.(2017)Movshovitz-Attias, Toshev, Leung, Ioffe,
  and Singh]{movshovitz2017no}
Yair Movshovitz-Attias, Alexander Toshev, Thomas~K Leung, Sergey Ioffe, and
  Saurabh Singh.
\newblock No fuss distance metric learning using proxies.
\newblock In \emph{CVPR}, pages 360--368, 2017.

\bibitem[Netzer et~al.(2011)Netzer, Wang, Coates, Bissacco, Wu, Ng,
  et~al.]{netzer2011reading}
Yuval Netzer, Tao Wang, Adam Coates, Alessandro Bissacco, Baolin Wu, Andrew~Y
  Ng, et~al.
\newblock Reading digits in natural images with unsupervised feature learning.
\newblock In \emph{NIPS workshop on deep learning and unsupervised feature
  learning}, page~7. Granada, Spain, 2011.

\bibitem[Nguyen et~al.(2019)Nguyen, Achille, Lam, Hassner, Mahadevan, and
  Soatto]{nguyen2019toward}
Cuong~V Nguyen, Alessandro Achille, Michael Lam, Tal Hassner, Vijay Mahadevan,
  and Stefano Soatto.
\newblock Toward understanding catastrophic forgetting in continual learning.
\newblock \emph{arXiv preprint arXiv:1908.01091}, 2019.

\bibitem[Nilsback and Zisserman(2006)]{nilsback2006visual}
M-E Nilsback and Andrew Zisserman.
\newblock A visual vocabulary for flower classification.
\newblock In \emph{CVPR}, pages 1447--1454. IEEE, 2006.

\bibitem[Nilsback and Zisserman(2008)]{nilsback2008automated}
Maria-Elena Nilsback and Andrew Zisserman.
\newblock Automated flower classification over a large number of classes.
\newblock In \emph{ICVGIP}. IEEE, 2008.

\bibitem[Parkhi et~al.(2012)Parkhi, Vedaldi, Zisserman, and
  Jawahar]{parkhi2012cats}
Omkar~M Parkhi, Andrea Vedaldi, Andrew Zisserman, and CV Jawahar.
\newblock Cats and dogs.
\newblock In \emph{CVPR}, pages 3498--3505. IEEE, 2012.

\bibitem[Patel et~al.(2022)Patel, Tolias, and Matas]{recallk_2022}
Yash Patel, Giorgos Tolias, and Ji{\v{r}}{\'\i} Matas.
\newblock Recall@ k surrogate loss with large batches and similarity mixup.
\newblock In \emph{CVPR}, pages 7502--7511, 2022.

\bibitem[Pei et~al.(2024)Pei, Xia, Chen, Li, Tian, and Lu]{pei2024sa2vp}
Wenjie Pei, Tongqi Xia, Fanglin Chen, Jinsong Li, Jiandong Tian, and Guangming
  Lu.
\newblock Sa$^2$vp: Spatially aligned-and-adapted visual prompt.
\newblock In \emph{AAAI}, 2024.

\bibitem[Pfeiffer et~al.(2020)Pfeiffer, R{\"u}ckl{\'e}, Poth, Kamath,
  Vuli{\'c}, Ruder, Cho, and Gurevych]{2020adapterhub}
Jonas Pfeiffer, Andreas R{\"u}ckl{\'e}, Clifton Poth, Aishwarya Kamath, Ivan
  Vuli{\'c}, Sebastian Ruder, Kyunghyun Cho, and Iryna Gurevych.
\newblock Adapterhub: A framework for adapting transformers.
\newblock \emph{arXiv preprint arXiv:2007.07779}, 2020.

\bibitem[Radford et~al.(2021{\natexlab{a}})Radford, Kim, Hallacy, Ramesh, Goh,
  Agarwal, Sastry, Askell, Mishkin, Clark, et~al.]{clip_2021}
Alec Radford, Jong~Wook Kim, Chris Hallacy, Aditya Ramesh, Gabriel Goh,
  Sandhini Agarwal, Girish Sastry, Amanda Askell, Pamela Mishkin, Jack Clark,
  et~al.
\newblock Learning transferable visual models from natural language
  supervision.
\newblock In \emph{ICML}, pages 8748--8763. PMLR, 2021{\natexlab{a}}.

\bibitem[Radford et~al.(2021{\natexlab{b}})Radford, Kim, Hallacy, Ramesh, Goh,
  Agarwal, Sastry, Askell, Mishkin, Clark, et~al.]{radford2021learning}
Alec Radford, Jong~Wook Kim, Chris Hallacy, Aditya Ramesh, Gabriel Goh,
  Sandhini Agarwal, Girish Sastry, Amanda Askell, Pamela Mishkin, Jack Clark,
  et~al.
\newblock Learning transferable visual models from natural language
  supervision.
\newblock In \emph{ICML}, pages 8748--8763. PMLR, 2021{\natexlab{b}}.

\bibitem[Ren et~al.(2021)Ren, Li, Wang, and Hua]{ren2021beyond}
Li Ren, Kai Li, LiQiang Wang, and Kien Hua.
\newblock Beyond the deep metric learning: enhance the cross-modal matching
  with adversarial discriminative domain regularization.
\newblock In \emph{ICPR}, pages 10165--10172. IEEE, 2021.

\bibitem[Ren et~al.(2024{\natexlab{a}})Ren, Chen, Wang, and
  Hua]{ren2024towards}
Li Ren, Chen Chen, Liqiang Wang, and Kien Hua.
\newblock Towards improved proxy-based deep metric learning via data-augmented
  domain adaptation.
\newblock In \emph{AAAI}, 2024{\natexlab{a}}.

\bibitem[Ren et~al.(2024{\natexlab{b}})Ren, Chen, Wang, and
  Hua]{ren2024learning}
Li Ren, Chen Chen, Liqiang Wang, and Kien~A. Hua.
\newblock Learning semantic proxies from visual prompts for parameter-efficient
  fine-tuning in deep metric learning.
\newblock In \emph{ICLR}, 2024{\natexlab{b}}.

\bibitem[Roth et~al.(2022)Roth, Vinyals, and Akata]{roth2022non}
Karsten Roth, Oriol Vinyals, and Zeynep Akata.
\newblock Non-isotropy regularization for proxy-based deep metric learning.
\newblock In \emph{CVPR}, pages 7420--7430, 2022.

\bibitem[Roweis et~al.(2004)Roweis, Hinton, and Salakhutdinov]{roweis2004nca}
Sam Roweis, Geoffrey Hinton, and Ruslan Salakhutdinov.
\newblock Neighbourhood component analysis.
\newblock \emph{NeurIPS}, 2004.

\bibitem[Sun et~al.(2017)Sun, Shrivastava, Singh, and Gupta]{sun2017revisiting}
Chen Sun, Abhinav Shrivastava, Saurabh Singh, and Abhinav Gupta.
\newblock Revisiting unreasonable effectiveness of data in deep learning era.
\newblock In \emph{ICCV}, pages 843--852, 2017.

\bibitem[Teh et~al.(2020)Teh, DeVries, and Taylor]{teh2020proxynca++}
Eu~Wern Teh, Terrance DeVries, and Graham~W Taylor.
\newblock Proxynca++: Revisiting and revitalizing proxy neighborhood component
  analysis.
\newblock In \emph{ECCV}, pages 448--464. Springer, 2020.

\bibitem[Touvron et~al.(2023)Touvron, Martin, Stone, Albert, Almahairi, Babaei,
  Bashlykov, Batra, Bhargava, Bhosale, et~al.]{touvron2023llama}
Hugo Touvron, Louis Martin, Kevin Stone, Peter Albert, Amjad Almahairi, Yasmine
  Babaei, Nikolay Bashlykov, Soumya Batra, Prajjwal Bhargava, Shruti Bhosale,
  et~al.
\newblock Llama 2: Open foundation and fine-tuned chat models.
\newblock \emph{arXiv preprint arXiv:2307.09288}, 2023.

\bibitem[Tsai et~al.(2024)Tsai, Mao, and Yang]{tsai2024convolutional}
Yun-Yun Tsai, Chengzhi Mao, and Junfeng Yang.
\newblock Convolutional visual prompt for robust visual perception.
\newblock \emph{Advances in Neural Information Processing Systems}, 36, 2024.

\bibitem[Tu et~al.(2023)Tu, Mai, and Chao]{tu2023visual}
Cheng-Hao Tu, Zheda Mai, and Wei-Lun Chao.
\newblock Visual query tuning: Towards effective usage of intermediate
  representations for parameter and memory efficient transfer learning.
\newblock In \emph{CVPR}, pages 7725--7735, 2023.

\bibitem[Van~Horn et~al.(2015)Van~Horn, Branson, Farrell, Haber, Barry,
  Ipeirotis, Perona, and Belongie]{van2015building}
Grant Van~Horn, Steve Branson, Ryan Farrell, Scott Haber, Jessie Barry, Panos
  Ipeirotis, Pietro Perona, and Serge Belongie.
\newblock Building a bird recognition app and large scale dataset with citizen
  scientists: The fine print in fine-grained dataset collection.
\newblock In \emph{CVPR}, pages 595--604, 2015.

\bibitem[Vaswani et~al.(2017{\natexlab{a}})Vaswani, Shazeer, Parmar, Uszkoreit,
  Jones, Gomez, Kaiser, and Polosukhin]{transformer_2017}
Ashish Vaswani, Noam Shazeer, Niki Parmar, Jakob Uszkoreit, Llion Jones,
  Aidan~N Gomez, {\L}ukasz Kaiser, and Illia Polosukhin.
\newblock Attention is all you need.
\newblock \emph{NeurIPS}, 30, 2017{\natexlab{a}}.

\bibitem[Vaswani et~al.(2017{\natexlab{b}})Vaswani, Shazeer, Parmar, Uszkoreit,
  Jones, Gomez, Kaiser, and Polosukhin]{vaswani2017attention}
Ashish Vaswani, Noam Shazeer, Niki Parmar, Jakob Uszkoreit, Llion Jones,
  Aidan~N Gomez, {\L}ukasz Kaiser, and Illia Polosukhin.
\newblock Attention is all you need.
\newblock \emph{Neurips}, 30, 2017{\natexlab{b}}.

\bibitem[Veeling et~al.(2018)Veeling, Linmans, Winkens, Cohen, and
  Welling]{veeling2018rotation}
Bastiaan~S Veeling, Jasper Linmans, Jim Winkens, Taco Cohen, and Max Welling.
\newblock Rotation equivariant cnns for digital pathology.
\newblock In \emph{MICCAI}, pages 210--218. Springer, 2018.

\bibitem[Venkataramanan et~al.(2022)Venkataramanan, Psomas, Avrithis, Kijak,
  Amsaleg, and Karantzalos]{mix2021}
Shashanka Venkataramanan, Bill Psomas, Yannis Avrithis, Ewa Kijak, Laurent
  Amsaleg, and Konstantinos Karantzalos.
\newblock It takes two to tango: Mixup for deep metric learning.
\newblock \emph{ICLR}, 2022.

\bibitem[Wah et~al.(2011)Wah, Branson, Welinder, Perona, and
  Belongie]{wah2011caltech}
Catherine Wah, Steve Branson, Peter Welinder, Pietro Perona, and Serge
  Belongie.
\newblock The caltech-ucsd birds-200-2011 dataset.
\newblock 2011.

\bibitem[Wang et~al.(2017)Wang, Yang, Xu, Hanjalic, and
  Shen]{wang2017adversarial}
Bokun Wang, Yang Yang, Xing Xu, Alan Hanjalic, and Heng~Tao Shen.
\newblock Adversarial cross-modal retrieval.
\newblock In \emph{Multimedia}, pages 154--162, 2017.

\bibitem[Wang et~al.(2023)Wang, Shi, Zhang, Li, Liu, Dai, Li, Xiong, and
  Tian]{wang2023adapting}
Yaoming Wang, Bowen Shi, Xiaopeng Zhang, Jin Li, Yuchen Liu, Wenrui Dai,
  Chenglin Li, Hongkai Xiong, and Qi Tian.
\newblock Adapting shortcut with normalizing flow: An efficient tuning
  framework for visual recognition.
\newblock In \emph{CVPR}. IEEE, 2023.

\bibitem[Wang et~al.(2024)Wang, Cheng, Fang, Zhang, Duan, and
  Wang]{wang2024revisiting}
Yuzhu Wang, Lechao Cheng, Chaowei Fang, Dingwen Zhang, Manni Duan, and Meng
  Wang.
\newblock Revisiting the power of prompt for visual tuning.
\newblock \emph{arXiv preprint arXiv:2402.02382}, 2024.

\bibitem[Xiao et~al.(2010)Xiao, Hays, Ehinger, Oliva, and
  Torralba]{xiao2010sun}
Jianxiong Xiao, James Hays, Krista~A Ehinger, Aude Oliva, and Antonio Torralba.
\newblock Sun database: Large-scale scene recognition from abbey to zoo.
\newblock In \emph{CVPR}, pages 3485--3492. IEEE, 2010.

\bibitem[Xie et~al.(2023)Xie, Yao, Shi, Liu, Zhou, Liu, Li, and
  Li]{xie2023difffit}
Enze Xie, Lewei Yao, Han Shi, Zhili Liu, Daquan Zhou, Zhaoqiang Liu, Jiawei Li,
  and Zhenguo Li.
\newblock Difffit: Unlocking transferability of large diffusion models via
  simple parameter-efficient fine-tuning.
\newblock In \emph{CVPR}, pages 4230--4239, 2023.

\bibitem[Yoo et~al.(2023)Yoo, Kim, Jung, Lee, and Yoon]{yoo2023improving}
Seungryong Yoo, Eunji Kim, Dahuin Jung, Jungbeom Lee, and Sungroh Yoon.
\newblock Improving visual prompt tuning for self-supervised vision
  transformers.
\newblock In \emph{ICML}, pages 40075--40092. PMLR, 2023.

\bibitem[Zaken et~al.(2021)Zaken, Ravfogel, and Goldberg]{zaken2021bitfit}
Elad~Ben Zaken, Shauli Ravfogel, and Yoav Goldberg.
\newblock Bitfit: Simple parameter-efficient fine-tuning for transformer-based
  masked language-models.
\newblock \emph{arXiv preprint arXiv:2106.10199}, 2021.

\bibitem[Zhai et~al.(2019)Zhai, Puigcerver, Kolesnikov, Ruyssen, Riquelme,
  Lucic, Djolonga, Pinto, Neumann, Dosovitskiy, et~al.]{zhai2019large}
Xiaohua Zhai, Joan Puigcerver, Alexander Kolesnikov, Pierre Ruyssen, Carlos
  Riquelme, Mario Lucic, Josip Djolonga, Andre~Susano Pinto, Maxim Neumann,
  Alexey Dosovitskiy, et~al.
\newblock A large-scale study of representation learning with the visual task
  adaptation benchmark.
\newblock \emph{arXiv preprint arXiv:1910.04867}, 2019.

\bibitem[Zhang et~al.(2024)Zhang, Zou, An, Liu, and Zhang]{zhang2023mosa}
Qizhe Zhang, Bocheng Zou, Ruichuan An, Jiaming Liu, and Shanghang Zhang.
\newblock Mosa: Mixture of sparse adapters for visual efficient tuning.
\newblock \emph{arXiv preprint arXiv:2312.02923}, 2024.

\bibitem[Zhang et~al.(2022)Zhang, Zhou, and Liu]{zhang2022neural}
Yuanhan Zhang, Kaiyang Zhou, and Ziwei Liu.
\newblock Neural prompt search.
\newblock \emph{arXiv preprint arXiv:2206.04673}, 2022.

\bibitem[Zheng et~al.(2021)Zheng, Lu, Zhao, Zhu, Luo, Wang, Fu, Feng, Xiang,
  Torr, et~al.]{zheng2021rethinking}
Sixiao Zheng, Jiachen Lu, Hengshuang Zhao, Xiatian Zhu, Zekun Luo, Yabiao Wang,
  Yanwei Fu, Jianfeng Feng, Tao Xiang, Philip~HS Torr, et~al.
\newblock Rethinking semantic segmentation from a sequence-to-sequence
  perspective with transformers.
\newblock In \emph{CVPR}, pages 6881--6890, 2021.

\bibitem[Zhou et~al.(2019)Zhou, Zhao, Puig, Xiao, Fidler, Barriuso, and
  Torralba]{zhou2019semantic}
Bolei Zhou, Hang Zhao, Xavier Puig, Tete Xiao, Sanja Fidler, Adela Barriuso,
  and Antonio Torralba.
\newblock Semantic understanding of scenes through the ade20k dataset.
\newblock \emph{IJCV}, 127:\penalty0 302--321, 2019.

\end{thebibliography}
}

\newpage
\section*{Appendix}

\thispagestyle{empty}

\appendix

\section{Details About the Experiments}
\label{exp:details}

\subsection{Datasets}
\paragraph{Classification Datasets.} \textbf{FGVC} encompasses five fine-grained visual classification datasets: CUB-200-2011 \citep{wah2011caltech}, NABirds \citep{van2015building}, Oxford Flowers \citep{nilsback2008automated}, Stanford Dogs \citep{khosla2011novel}, and Stanford Cars \citep{gebru2017fine}. Following \citet{jia2022visual}, we split each dataset into \texttt{train} (90\%) and \texttt{val} (10\%) subsets. 

\textbf{VTAB-1K} contains 19 diverse visual tasks across three categories: (i) \textit{Natural} tasks involving standard camera images for object classification and scene recognition; (ii) \textit{Specialized} tasks using domain-specific imagery such as medical scans and satellite data; and (iii) \textit{Structured} tasks focusing on spatial relationships and object properties.

\paragraph{Segmentation Datasets.} We evaluate on two semantic segmentation benchmarks: \textbf{ADE20K} with 150 fine-grained semantic concepts, and \textbf{PASCAL Context} providing pixel-wise annotations across 60 object classes. For dataset partitioning, we strictly follow the protocol established in VPT \citep{jia2022visual}. Complete dataset statistics and task details are provided in Table~\ref{tab:data_info}.

\subsection{Implementation Details}
\paragraph{Classification Tasks.} For FGVC datasets, we employ standard data augmentation: random resizing and cropping to 224$\times$224 pixels with random horizontal flipping. For VTAB-1K, following \citet{zhai2019large} and \citet{jia2022visual}, images are directly resized to 224$\times$224 pixels without additional augmentation.

Model training utilizes the AdamW optimizer with a batch size of 32 over 100 epochs. The learning rate follows a combined schedule: a 10-epoch linear warm-up followed by cosine decay \citep{loshchilov2016sgdr} from the initial value to 1e-8. We determine optimal hyperparameters through cross-validation on the validation set. Following established protocols \citep{jia2022visual,lian2022scaling,gao2023tuning}, we report mean accuracy across three runs with different random seeds.

\paragraph{Segmentation Tasks.} We implement our experiments using the SETR framework \citep{zheng2021rethinking} through MMSegmentation. We adopt the SETR-PUP configuration, utilizing one primary head and three auxiliary heads to process features from transformer layers 9, 12, 18, and 24. Training follows \citet{zheng2021rethinking}: 160k iterations for ADE20K and 80k iterations for PASCAL Context, with hyperparameter optimization mirroring our classification approach.

For multi-class segmentation samples, we adapt the class assignment strategy by randomly selecting one non-background class as the target class for visual prompt assignment during each iteration, accounting for the pixel-wise multi-class nature of segmentation tasks.

\subsection{Hyperparameter Configuration}
\paragraph{Parameter Search Space.} Table \ref{tab:param_space} details our hyperparameter search space for each task, including learning rate, weight decay, and the number and location of layers guided by semantic metrics loss. For the main results, we maintain default values for Proxy-Anchor loss parameters to narrow the parameter searching space.
%%%%%%%%%%%%%%%%% searching hyper-parameters %%%%%%%%%%%%%%%%%%s
\begin{table}[ht]
\centering
\resizebox{0.99\linewidth}{!}{
    \begin{tabular}{l l}
    \hline \B{Configuration} & \B{Value} \\ \hline
    Optimizer & AdamW \cite{loshchilov2017decoupled} \\
    Base learning rate range & \{1e-3, 5e-4, 1e-4, 5e-5\} \\
    Weight decay range & \{0.001, 0.005, 0.01, 0.05, 0.1, 0.5, 1.0\} \\
    Learning rate schedule & Cosine Decay \cite{loshchilov2016sgdr} \\
    Layers applied guidance & \{ 12, 10, 8, 6, 4, 2, 0\}\\
    Num of prompts applied guidance & \{ 5, 10, 20, 40\} \\
    Proxy-Anchor $\delta$ & 32.0 \\
    Proxy-Anchor $\tau$ & 10.0 \\
    Batch size & 32 \\
    Warmup epoch & 10 \\
    Total epoch & 100 (ViT-B/16) \\
    Augmentation & RandomResizedCrop, RandomHorizontalFlip \\
    \hline
    \end{tabular}
}
\caption{Hyper Parameters Searching Space and Training configuration in our experiments}
\label{tab:param_space}
\end{table}

\paragraph{Prompt Configuration for Tasks with Limited Classes.} For tasks with limited classes ($C<5$), we augment the visual prompts by adding extra unassigned prompts that are not guided by semantic metrics loss. This approach empirically improves prompt transferability, particularly in tasks with very few classes (e.g., Patch Camelyon, Retinopathy, or KITTI-Dist in VTAB). The extra prompts help maintain an effective number of visual prompts in the guiding layer while preserving the semantic structure of the original class-assigned prompts.

%%%%%%%%%%%%%%%%%%%%%%%%%%%%%%%%%%% details of datasets %%%%%%%%%%%%%%%%%%%%%%%%%%%%%%%%%%%%%%%

\begin{table*}[ht]
\centering
\small
\resizebox{0.95\linewidth}{!}{
\begin{tabular}{|c|c|c|c|c|c|}
\hline  \B{Datasets} & \B{Task Description} & \B{Classes} & \B{Train Size} & \B{Val Size} & \B{Test Size} \\ 
	
\hline  \multicolumn{6}{|c|}{Fine-Grained Visual Classification (FGVC) \citep{jia2022visual} } \\  \hline 

CUB-200-2011 \citep{wah2011caltech} & Fine-grained Bird Species Recognition & 200 & 5,394 & 600 & 5,794 \\ 
NABirds \citep{van2015building} & Fine-grained Bird Species Recognition & 55 & 21,536 & 2,393 & 24,633 \\ 
Oxford Flowers \citep{nilsback2008automated} & Fine-Grained Flower Species recognition & 102 & 1,020 & 1,020 & 6,149 \\  
Stanford Dogs \citep{khosla2011novel} & Fine-grained Dog Species Recognition & 120 & 10,800 & 1,200 & 8,580 \\ 
Stanford Cars \citep{gebru2017fine} & Fine-grained Car Classification & 196 & 7,329 & 815 & 8,041 \\ 

\hline  \multicolumn{6}{|c|}{Visual Task Adaptation Benchmark (VTAB-1k) \citep{zhai2019large} } \\  \hline

Caltech101 \citep{fei2006one} & \multirow{7}{5cm}{ Natural-Tasks (7) \newline Natural images captured using standard cameras.} & 102 & \multirow{7}{*}{800/1000} & \multirow{7}{*}{200} & 6,084 \\ 
CIFAR-100 \citep{krizhevsky2009learning} & & 100 &  & & 10,000 \\ 
DTD \citep{cimpoi2014describing} &  & 47 &  &  & 1,880 \\ 
Oxford-Flowers102 \citep{nilsback2006visual} &  & 102 &  &  & 6,149 \\ 
Oxford-PetS \citep{parkhi2012cats} &  & 37 &  &  & 3,669 \\ 
SVHN \citep{netzer2011reading} &  & 10 &  &  & 26,032 \\ 
Sun397 \citep{xiao2010sun} &  & 397 &  &  & 21,750 \\ 
\hline 
Patch Camelyon \citep{veeling2018rotation} &  \multirow{4}{5cm}{Special-Tasks (4) \newline Images captured via specialized equipments } & 2 & \multirow{4}{*}{800/1000} & \multirow{4}{*}{200} & 32,768 \\ 
EuroSAT \citep{helber2019eurosat} &  & 10 &  &  & 5,400 \\ 
Resisc45 \citep{cheng2017remote} &  & 45 &  &  & 1,880 \\
Retinopathy \citep{retinopathy} & & 5 & & & 42,670 \\
\hline 
Clevr/count \citep{johnson2017clevr} & \multirow{8}{5cm}{Structured-Tasks (8) \newline Require geometric comprehension} &  & \multirow{8}{*}{800/1000} & \multirow{8}{*}{200} & 15,000  \\ 
Clevr/distance \citep{johnson2017clevr} &  & 6 & & & 15,000 \\ 
DMLab \citep{beattie2016deepmind} &  &  6 &  &  & 22,735 \\ 
KITTI-Dist \citep{geiger2013vision} &  & 4 &  &  & 711 \\ 
dSprites/location \citep{matthey2017dsprites} &  & 16 &  &  & 73,728 \\ 
dSprites/orientation \citep{matthey2017dsprites} &  & 16 &  &  & 73,728 \\ 
SmallNORB/azimuth \citep{lecun2004learning}  &  & 18 &  &  & 12,150 \\ 
SmallNORB/elevation \citep{lecun2004learning} &  & 18 &  &  & 12,150 \\ 

\hline  \multicolumn{6}{|c|}{Image Semantic Segmentation } \\  \hline
ADE20K \citep{zhou2019semantic} & \multirow{2}{5cm}{Fine-grained images with pixel-wise \newline semantic annotations} & 150 & 20210 & 2000 & 3352 \\
PASCAL Context \citep{mottaghi2014role} & & 60 &  4998 & 5105 & ---- \\

\hline

\end{tabular} 
}
\caption{ \textbf{The details and specifications of the downstream task datasets we selected to evaluate our proposed framework.}}
\label{tab:data_info}
\end{table*}

\section{Discussion and Comparison with Self-SPT}
\label{sec:compare_self_spt}

\paragraph{Methodological Distinctions.} While both Self-SPT and our work leverage prompt distributions to enhance representation learning, they differ fundamentally in their approaches. Self-SPT attempts to align prompt and visual token distributions through initialization, using mean or max pooling of input data to set background values. In contrast, our method achieves distribution matching at the semantic level throughout the optimization process. This semantic-level matching enables our visual prompts to capture discriminative features by explicitly considering class relationships during training.

\paragraph{Our Key Advantages.} Our approach demonstrates several significant advantages over Self-SPT:
\begin{itemize}
    \item \textbf{Continuous Optimization:} Our method maintains distribution regularization throughout the entire optimization process, while Self-SPT only applies distribution alignment during initialization.
    
    \item \textbf{Discriminative Feature Learning:} Through metric guidance, our prompts explicitly capture class-specific discriminative features by comparing tokens from the same and different classes. In contrast, Self-SPT's uniform background value initialization does not differentiate between class-specific features.
    
    \item \textbf{Computational Efficiency:} Our method significantly reduces pre-processing overhead by clustering class representations rather than entire visual token sets, avoiding the computational burden of Self-SPT's k-means clustering approach on the full dataset.
\end{itemize}

\paragraph{Empirical Analysis.} To thoroughly evaluate the relationship between background value initialization and metric guidance learning, we attempted to reproduce Self-SPT's results and assess its performance when integrated with our method. As illustrated in Figure~\ref{fig:compare_spt}, our experiments revealed two key findings: (1) we were unable to reproduce the performance metrics reported in the original Self-SPT paper, and (2) the background value initialization strategy did not yield measurable improvements when combined with our metric guidance approach. These results further validate our focus on semantic-level distribution matching as the primary mechanism for improving prompt optimization.

\begin{figure}
    \centering
    \includegraphics[width=0.95\linewidth]{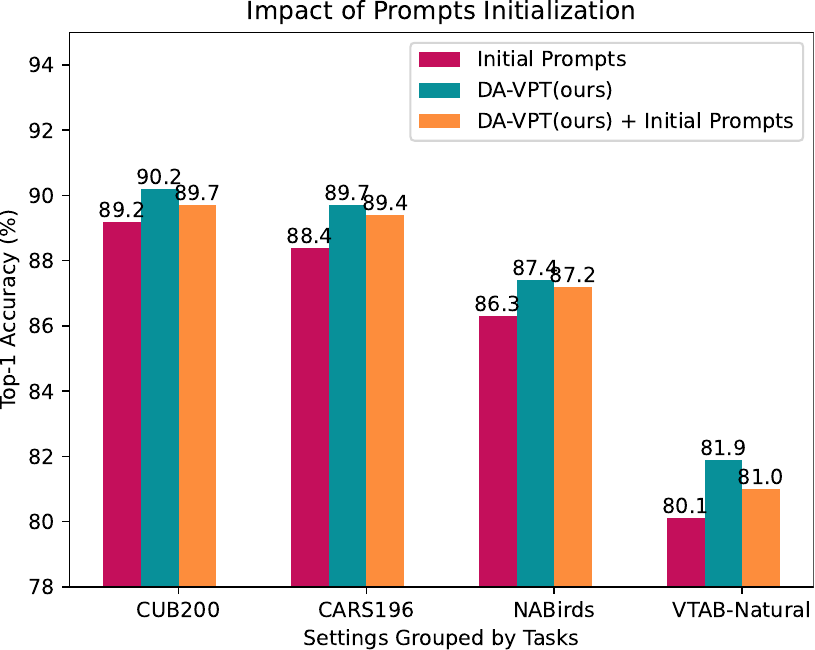}
    \caption{\textbf{Impact of Prompt Initialization Strategies.} Comparative analysis of model performance under different prompt initialization schemes. Results demonstrate that the background value initialization method proposed in Self-SPT \citep{wang2024revisiting}, which uses mean pooled visual tokens, shows no significant performance gains when combined with our distribution-aware guidance approach. This suggests that our method's effectiveness stems from its continuous distribution matching during training rather than initial prompt configurations.}
    \label{fig:compare_spt}
\end{figure}

The comparative analysis demonstrates that while both methods address prompt distribution optimization, our approach offers \textit{more robust theoretical foundations}, \textit{better computational efficiency}, and \textit{superior empirical performance}. The continuous nature of our distribution regularization, combined with explicit semantic guidance, provides a more principled framework for learning discriminative prompt representations.

\section{DA-VPT+: Integration with Bias Tuning}
\label{sec:davpt_plus}

This section examines the synergy between our proposed metric learning guidance and bias tuning in PEFT models. Our investigation is motivated by an intriguing observation from the original VPT work \citep{jia2022visual}, which reported that bias tuning adversely affects vanilla VPT optimization. We present a novel perspective on this interaction and demonstrate how our approach effectively addresses these limitations.

\paragraph{Theoretical Motivation.} The distribution of visual tokens in transformer layers is inherently constrained by the representations from previous layers and their associated visual prompts. We hypothesize that unfreezing bias terms, particularly in Key and Value projections, introduces additional flexibility in token representation. This flexibility becomes especially significant when combined with our metric learning guidance, as it allows for more nuanced distribution alignment between visual prompts and tokens.

\paragraph{Advantages over Vanilla VPT.} Unlike vanilla VPT, DA-VPT explicitly manages distribution alignment between visual prompts and tokens through metric learning guidance. This explicit alignment makes our method more responsive to distribution shifts in visual tokens, which are substantially influenced by projection layer bias terms. By simultaneously optimizing bias terms and maintaining distribution alignment, DA-VPT+ achieves more robust and effective feature representations. This integration of bias tuning with DA-VPT demonstrates how our method's distribution-aware approach can transform a previously problematic component (bias tuning) into a complementary enhancement.

\section{Supplemental Empirical Studies}
\label{sec:supp_studies}

We conduct additional empirical investigations to explore potential extensions of our proposed framework in two key directions: alternative metric learning approaches and modified architectural connections.

\paragraph{Alternative Metric Learning Losses.} We investigate the effectiveness of different metric learning losses for guiding visual prompt distributions. Specifically, we compare our proposed Proxy-Anchor (PA) loss against two alternatives: vanilla Proxy-NCA loss and triplet loss. For these vanilla losses, we treat the selected prompts as individual data points, with class assignments determined by the current training epoch. This comparison helps us understand the relative advantages of our PA loss formulation in the context of prompt optimization. Details are listed in Table \ref{tab:empirical_study}.

\paragraph{Modified Connection Structures.} We examine two architectural modifications to the base framework:
1) Cross-layer prompt connections (DA-VPT+Conn), which enable information flow between prompts at different layers, and
2) Learnable gated connections following GateVPT \citep{yoo2023improving} (DA-VPT+Gate), which introduce adaptive control over prompt interactions.

These architectural studies provide insights into the role of prompt connectivity in our distribution-aware framework. The experimental results and detailed analysis of these variations are presented in Table \ref{tab:empirical_study}.

\begin{table}
    \centering
    \resizebox{0.99\linewidth}{!}{
    \begin{tabular}{ccccc}
       Methods  & CUB200 & Cars & NABirds & VTAB-Natural \\ \hline
       VPT (baseline)     & 88.6 & 87.4 & 85.7 & 78.48 \\ \hline
       DA-VPT   & 90.2 (+1.6) & 89.7 (+2.3) & 87.4 (+1.7) & 80.25 (+1.77) \\ 
       DA-VPT+Conn  & 89.6 (+1.0) & 88.1 (+0.7) & 86.8 (+1.1) & 79.11 (+0.63) \\ 
       DA-VPT+Gate  & 89.8 (+1.2) & 88.4 (+1.0) & 87.0 (+1.3) & 79.48 (+1.00) \\ 
       DA-VPT (PNCA) & 89.2 (+0.6) & 88.2 (+0.8) & 86.9 (+1.2) & 79.22 (+0.74) \\ 
       DA-VPT (triplet) & 87.9 (-0.7) & 87.1 (-0.3) & 85.4 (-0.3) & 78.61 (+0.13) \\ \hline
    \end{tabular}
    }
    \caption{\textbf{Analysis of Connection Structures and Metric Learning Variants.} Empirical evaluation of (a) different visual prompt connection architectures across transformer layers and (b) alternative metric learning approaches. Our results indicate that neither cross-layer connections nor learnable activation gates provide substantial improvements over our base method. Furthermore, experiments with alternative metric learning losses show less stable fine-tuning performance compared to our approach, with some variants performing below the VPT baseline. These findings suggest that the effectiveness of our method primarily stems from its distribution-aware prompt optimization rather than architectural modifications or alternative metric formulations.}
    \label{tab:empirical_study}
\end{table}

\begin{algorithm}
\caption{Distribution Aware Visual Prompt Tuning (DA-VPT)}
\label{alg:da-vpt}
    \begin{algorithmic}
    \State \textbf{Input:} Pre-trained ViT model $f_\theta$, Dataset $\mathcal{D} = {(x_i, y_i)}_{i=1}^N$, \\
    number of prompts $M$, $\beta$, $\lambda$, learning rate and other related hyperparameters
    \State \textbf{Output:} Fine-tuned ViT model $f_{\theta}$
    \State Initialize $M$ prompts $\mathbf{p}^l$ for each layer $l$
    \State Get class tokens $\mathbf{S} \in \mathbb{R}^{C \times D}$ by Forward passing $f_\theta$
    \State Create a mapping from $C$ classes to $M$ prompts ($C \rightarrow M$) using k-means clustering on $\mathbf{S}$
    \While {\I{stop criteria is not satisfied}}
    \State Obtain a batch $\{x_i, y_i\}_{i=1}^n$ from $\mathcal{D}$
    \State Forward pass $\mathbf{x}_i$ through ViT $f_\theta$ with prompts $\mathbf{p}^l$
    \State Select saliency patch $\mathbf{x}$ right after attention layer in last selected blocks
    \State Calculate metric learning losses $\mathcal{L}_\text{ML}(\mathbf{x}, \mathbf{p})$ and  $\mathcal{L}_\text{ML}(\mathbf{p}, \mathbf{x}_\text{cls})$
    \State Calculate cross-entropy loss $\mathcal{L}_\text{CE}$
    \State Minimize loss: $\mathcal{L} = \mathcal{L}_\text{CE} + \beta \mathcal{L}_\text{ML}(\mathbf{x}, \mathbf{p}) + \lambda \mathcal{L}_\text{ML}(\mathbf{p}, \mathbf{x}_\text{cls})$
    \State Update $\mathbf{p}$ and other learnable parameters from Backward of $\mathcal{L}$
    \State Update class tokens $\mathbf{S}$ and class-prompt mapping $C \rightarrow M$ after certain steps
    \EndWhile
    \State \textbf{return} Fine-tuned ViT model $f_{\theta}$
    \end{algorithmic}
\end{algorithm}

\section{The Proof and Detial of theorem 1}

\begin{theorem}
For a weight perturbation $\Delta a_i$ calculated using the softmax function, there is an approximate relationship:
\[
\Delta a_i \approx a_i (1 - a_i) \Delta s_i,
\]
where $\Delta s_i$ is a small change in the attention score $s_i$, and
\[
\Delta s_i = \frac{\Delta p^\top v_i}{\sqrt{d}}.
\]
\end{theorem}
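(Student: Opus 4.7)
The plan is to treat the theorem as a first-order linearization result and split the argument into two pieces: (i) differentiate the softmax mapping $s \mapsto a$ to extract the scalar factor $a_i(1-a_i)$, and (ii) differentiate the attention score $s_i = \mathbf{p}^\top \mathbf{x}_i/\sqrt{d}$ with respect to the prompt perturbation to obtain $\Delta s_i = \Delta \mathbf{p}^\top \mathbf{x}_i/\sqrt{d}$. Combining the two by a one-term Taylor expansion yields the claimed approximation.

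\paragraph{Step 1: softmax derivative.} I would start from $a_i = e^{s_i}/\sum_{j} e^{s_j}$ and compute the partial derivative $\partial a_i/\partial s_i$ directly via the quotient rule (or via $\log a_i = s_i - \log \sum_j e^{s_j}$ for a cleaner derivation). Either way the output is $\partial a_i/\partial s_i = a_i - a_i^2 = a_i(1-a_i)$. A first-order Taylor expansion of $a_i$ about the unperturbed score gives
\begin{equation}
\Delta a_i = \frac{\partial a_i}{\partial s_i}\,\Delta s_i + O\!\bigl(\Delta s_i^2\bigr) \approx a_i(1-a_i)\,\Delta s_i,
\end{equation}
which captures the dominant diagonal term of the softmax Jacobian.

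\paragraph{Step 2: linearizing the attention score.} Since $s_i = \mathbf{p}^\top \mathbf{x}_i/\sqrt{d}$ is linear in $\mathbf{p}$, perturbing $\mathbf{p}$ by $\Delta \mathbf{p}$ gives exactly $\Delta s_i = \Delta \mathbf{p}^\top \mathbf{x}_i / \sqrt{d}$ with no higher-order remainder. Substituting this into the result of Step 1 produces the stated formula. One can then immediately read off the sign condition $a_i(1-a_i)\,\Delta \mathbf{p}^\top \mathbf{x}_i/\sqrt{d} > 0$ that is used in the surrounding discussion, since $a_i(1-a_i) \ge 0$ always.

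\paragraph{Main obstacle.} The only real subtlety is justifying why the off-diagonal softmax Jacobian entries $\partial a_i/\partial s_j = -a_i a_j$ for $j\neq i$ can be dropped: when $\mathbf{p}$ changes, \emph{every} score $s_j$ changes simultaneously, so a fully consistent first-order expansion would add a term $-a_i \sum_j a_j \Delta s_j$. I would handle this by stating explicitly that the theorem isolates the effect attributable to the single score $s_i$ (i.e., treating $\Delta s_i$ as the local perturbation in attention score $i$ with the others held fixed), which matches the statement's wording ``$\Delta s_i$ represents the change in attention score $s_i$.'' Under this reading the approximation is exact to first order; otherwise it is a leading-order term whose accuracy is controlled by the mass $\sum_{j \neq i} a_j |\Delta s_j|$, which is small whenever the attention is peaked or the perturbation is aligned primarily with $\mathbf{x}_i$. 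I would record this as a brief remark rather than a computation.
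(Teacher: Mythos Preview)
Your proposal is correct and follows essentially the same route as the paper: first-order Taylor expansion of the softmax, extraction of the diagonal Jacobian term $a_i(1-a_i)$, and then the linear expression for $\Delta s_i$. The only presentational difference is that the paper writes out the full softmax Jacobian $\partial a_i/\partial s_j = a_i(\delta_{ij}-a_j)$ first, keeps the cross term $-a_i\sum_{j\neq i}a_j\Delta s_j$ explicitly, and then drops it under the stated assumption that this weighted sum is negligible (peaked attention or small/uncorrelated $\Delta s_j$)---exactly the second reading you sketch in your ``Main obstacle'' paragraph, rather than the ``hold the other scores fixed'' reading.
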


\begin{proof}

The attention weights \(a_i\) are calculated using the softmax function applied to the attention scores \(s_i\):
   \[
   a_i = \frac{e^{s_i}}{\sum_{j} e^{s_j}}.
   \]

The partial derivative of \(a_i\) with respect to \(s_j\) is given by:
   \[
   \frac{\partial a_i}{\partial s_j} = a_i (\delta_{ij} - a_j),
   \]
where \(\delta_{ij}\) is the Kronecker delta function:
   \[
   \delta_{ij} = \begin{cases}
   1, & \text{if } i = j, \\
   0, & \text{if } i \ne j.
   \end{cases}
   \]

For small perturbations \(\Delta s_j\), we can approximate the change in \(a_i\) using a first-order Taylor expansion:
   \[
   \Delta a_i \approx \sum_{j} \frac{\partial a_i}{\partial s_j} \Delta s_j.
   \]

Then we substitute the expression for the derivative:
   \[
   \Delta a_i \approx \sum_{j} a_i (\delta_{ij} - a_j) \Delta s_j.
   \]

Split the summation into two parts:
   \[
   \Delta a_i \approx a_i (1 - a_i) \Delta s_i - a_i \sum_{j \ne i} a_j \Delta s_j.
   \]

We assume that the weighted sum of the perturbations \(\Delta s_j\) for \(j \ne i\) is negligible:
   \[
   \sum_{j \ne i} a_j \Delta s_j \approx 0.
   \]
This approximation is reasonable when:
   \begin{itemize}
     \item The perturbations \(\Delta s_j\) for \(j \ne i\) are small and uncorrelated, so they average out.
     \item The attention weights \(a_j\) for \(j \ne i\) are small (i.e., \(a_i\) is dominant).
   \end{itemize}

Under this assumption, the expression simplifies to:
   \[
   \Delta a_i \approx a_i (1 - a_i) \Delta s_i.
   \]

\end{proof}
 %%%% ================================ Limitation =============================================
\section{Limitations}
\label{sec:limits}

Our Parameter-Efficient Fine-Tuning (PEFT) approach, while effective, still faces a few key challenges.

\paragraph{Hyperparameter Sensitivity.} The introduction of metric learning losses alongside the standard cross-entropy loss creates additional complexity in hyperparameter optimization. The performance of our method depends significantly on the weight ratios $\beta$ and $\lambda$, which require careful tuning for each combination of backbone model and downstream task. This dependency can make the optimization process more time-intensive compared to simpler PEFT approaches.

\paragraph{Computational Overhead.} Our method introduces additional computational costs through the metric learning losses and their associated operations. While the increased latency remains within practical bounds (typically 5\% higher than baseline PEFT methods), it may impact applications with strict real-time requirements or resource constraints.

\paragraph{Limited on some rare scenes} 
Our proposed method is proper for tasks that have a comparable number of classes. In some special tasks that have a limited number of classes (e.g., Patch Camelyon, Retinopathy, or KITTI-Dist in VTAB-1k), the number of prompts should be the same as the classes that are too few to have enough transfer capacities. In this case, we suggest adding a few supplemental visual prompts that are not assigned to class labels and are guided by the metric learning loss. In other words, we still propose to add normal visual prompts in addition to the guided prompts when the number of classes is very limited.

\section{Future Works} 

To address these limitations, we plan to develop automated hyperparameter optimization strategies, potentially leveraging meta-learning or Bayesian optimization techniques. We will also investigate more efficient metric learning formulations that maintain performance while reducing computational overhead. Additionally, our research will explore hardware-specific optimizations to minimize the latency impact in practical deployments.

Despite these challenges, our experimental results demonstrate that the performance improvements offered by our method consistently outweigh its limitations across diverse tasks and model architectures.

\section{Broader Impact}
\label{sec:impact}

Distribution Aware Visual Prompt Tuning (DA-VPT) has significant implications for both technical advancement and societal applications.

\paragraph{Technical Contributions.} Our method advances the performance in parameter-efficient fine-tuning by enabling more efficient adaptation of large vision models to specific domains. The reduced computational requirements for model specialization, coupled with improved performance on fine-grained visual tasks, make sophisticated vision models more accessible and practical for real-world applications.

\paragraph{Potential Applications.} DA-VPT could enable significant advances in several high-impact domains. In healthcare, it can facilitate more accurate medical image analysis with limited training data, potentially improving diagnostic accuracy and treatment planning. Environmental protection efforts could benefit from enhanced wildlife monitoring and biodiversity assessment capabilities. The method's efficiency also enables deployment of sophisticated vision models on edge devices, advancing assistive technologies for accessibility applications. Furthermore, industrial applications such as quality control and visual inspection systems could see substantial improvements in accuracy and reliability.

\begin{figure*}[!th]
\begin{center}
    \begin{subfigure}[b]{0.3\textwidth}
        \centering
        \includegraphics[width=\textwidth]{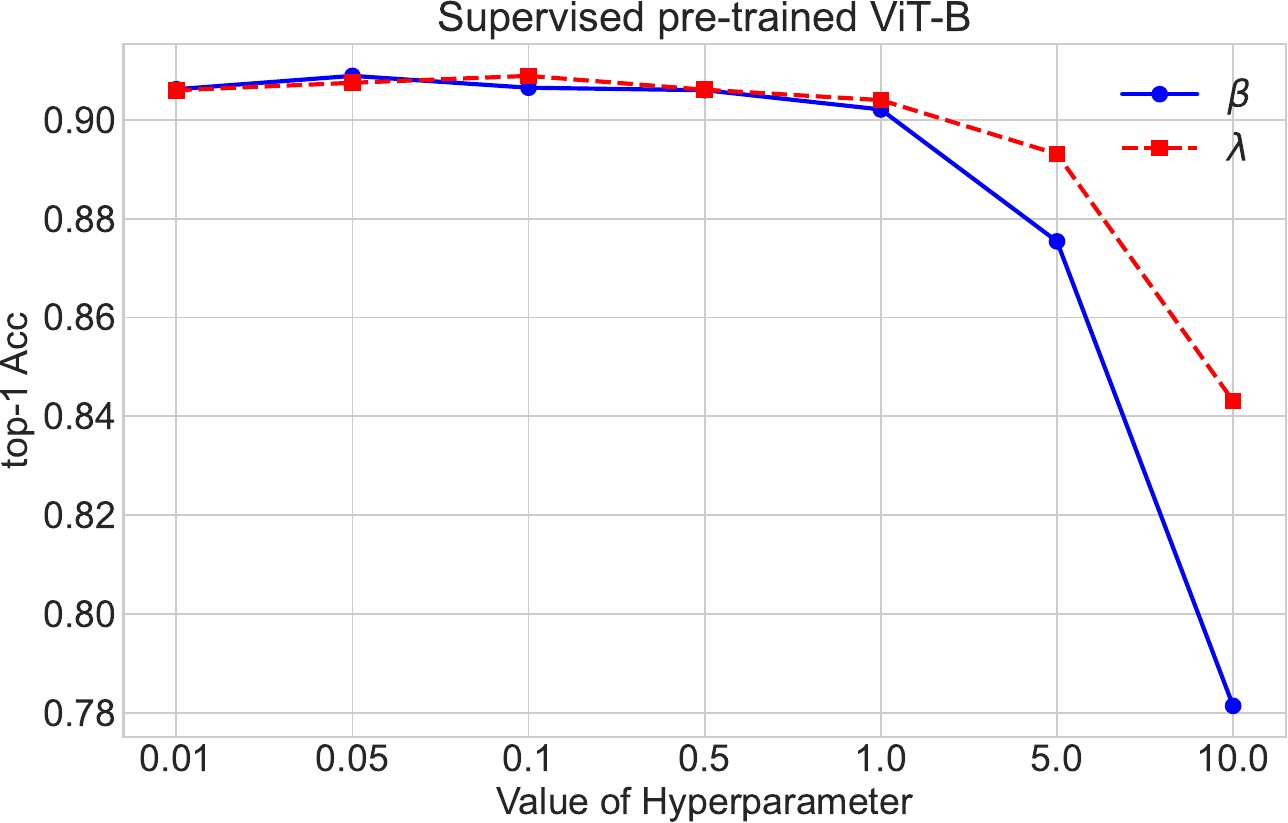}
        \label{fig:sub1}
    \end{subfigure}
    \hfill
    \begin{subfigure}[b]{0.3\textwidth}
        \centering
        \includegraphics[width=\textwidth]{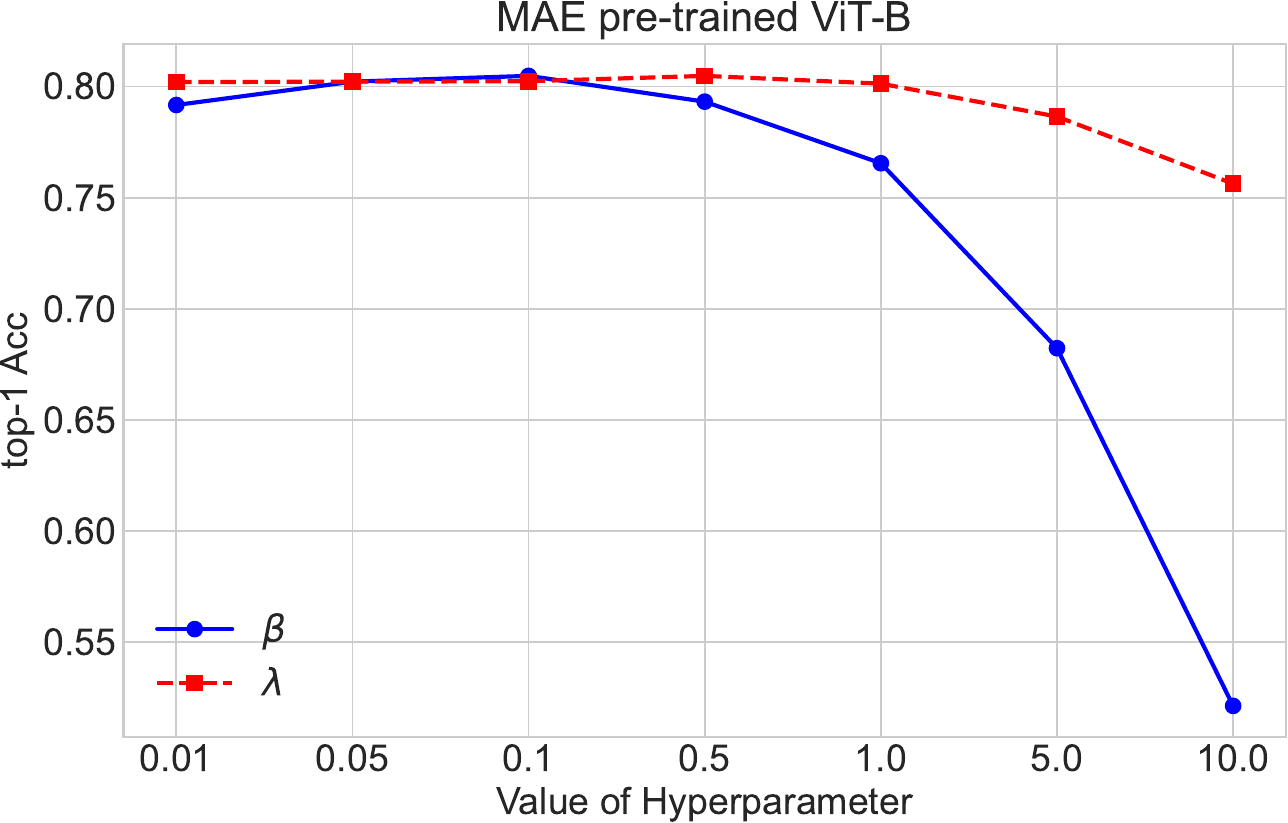}
        \label{fig:sub2}
    \end{subfigure}
    \hfill
    \begin{subfigure}[b]{0.3\textwidth}
        \centering
        \includegraphics[width=\textwidth]{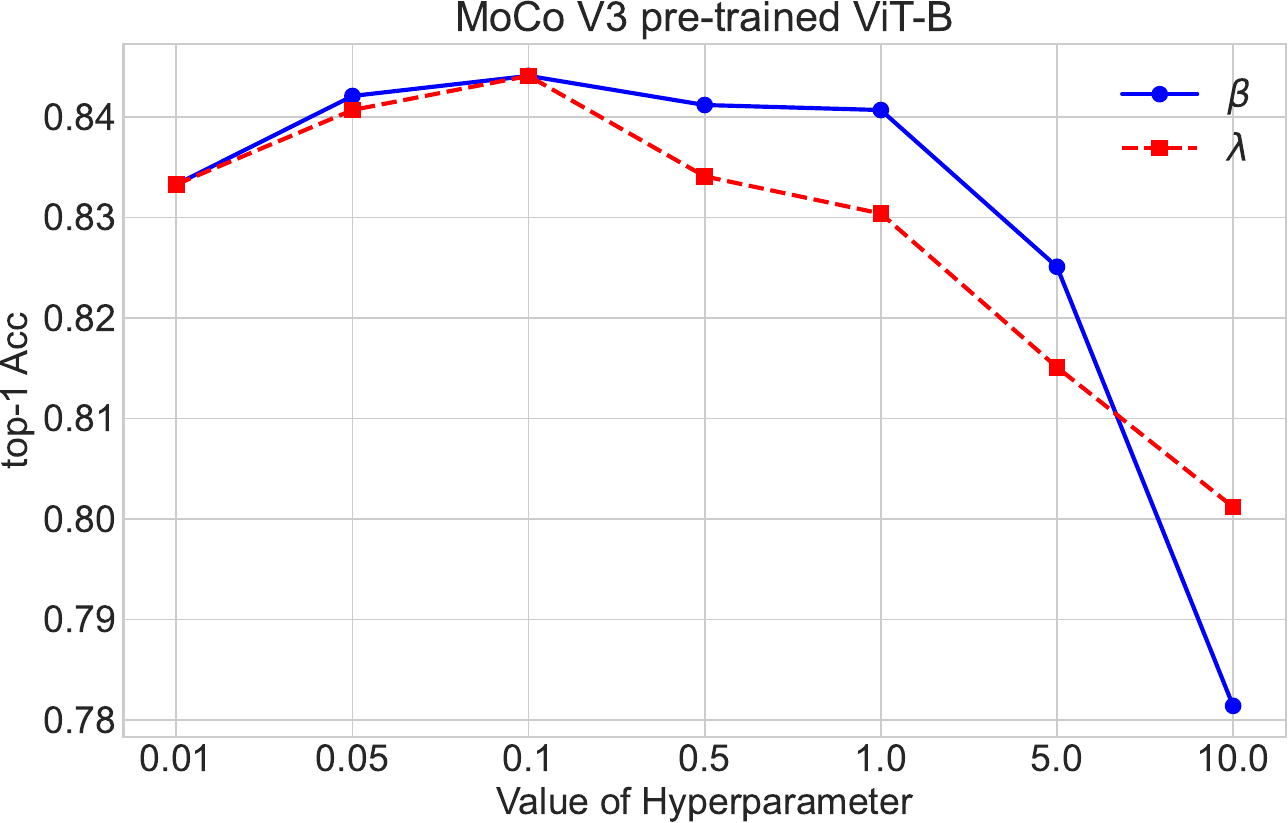}
        \label{fig:sub3}
    \end{subfigure}
    
    \vskip\baselineskip
    
    \begin{subfigure}[b]{0.3\textwidth}
        \centering
        \includegraphics[width=\textwidth]{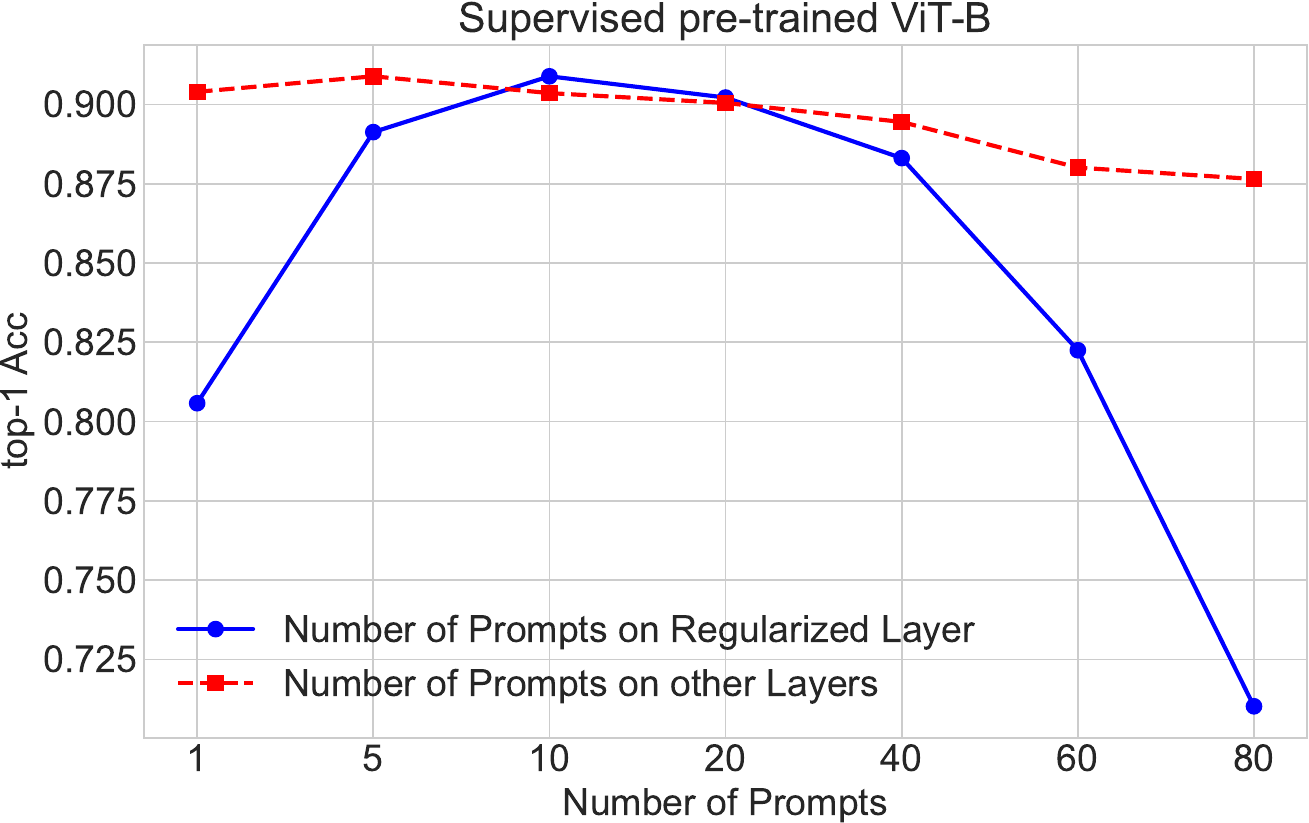}
        \label{fig:sub4}
    \end{subfigure}
    \hfill
    \begin{subfigure}[b]{0.3\textwidth}
        \centering
        \includegraphics[width=\textwidth]{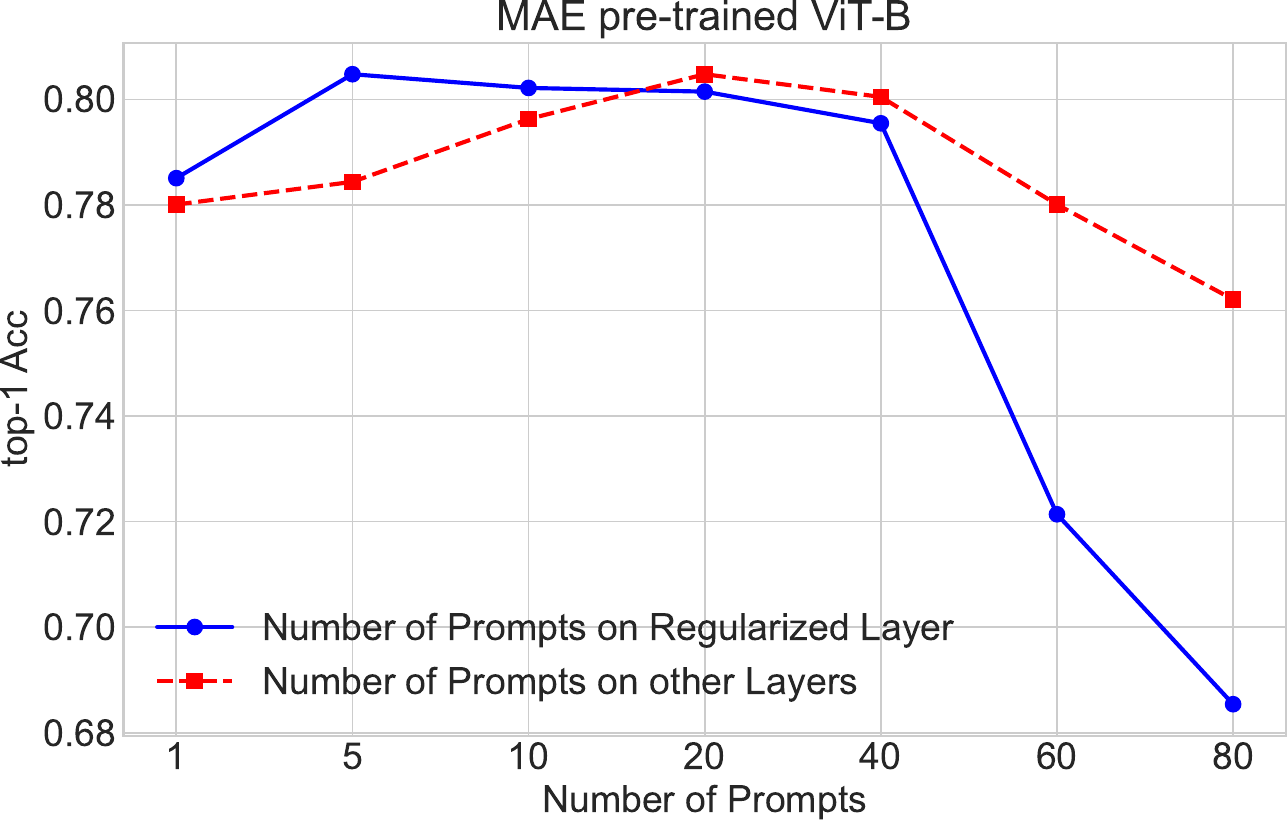}
        \label{fig:sub5}
    \end{subfigure}
    \hfill
    \begin{subfigure}[b]{0.3\textwidth}
        \centering
        \includegraphics[width=\textwidth]{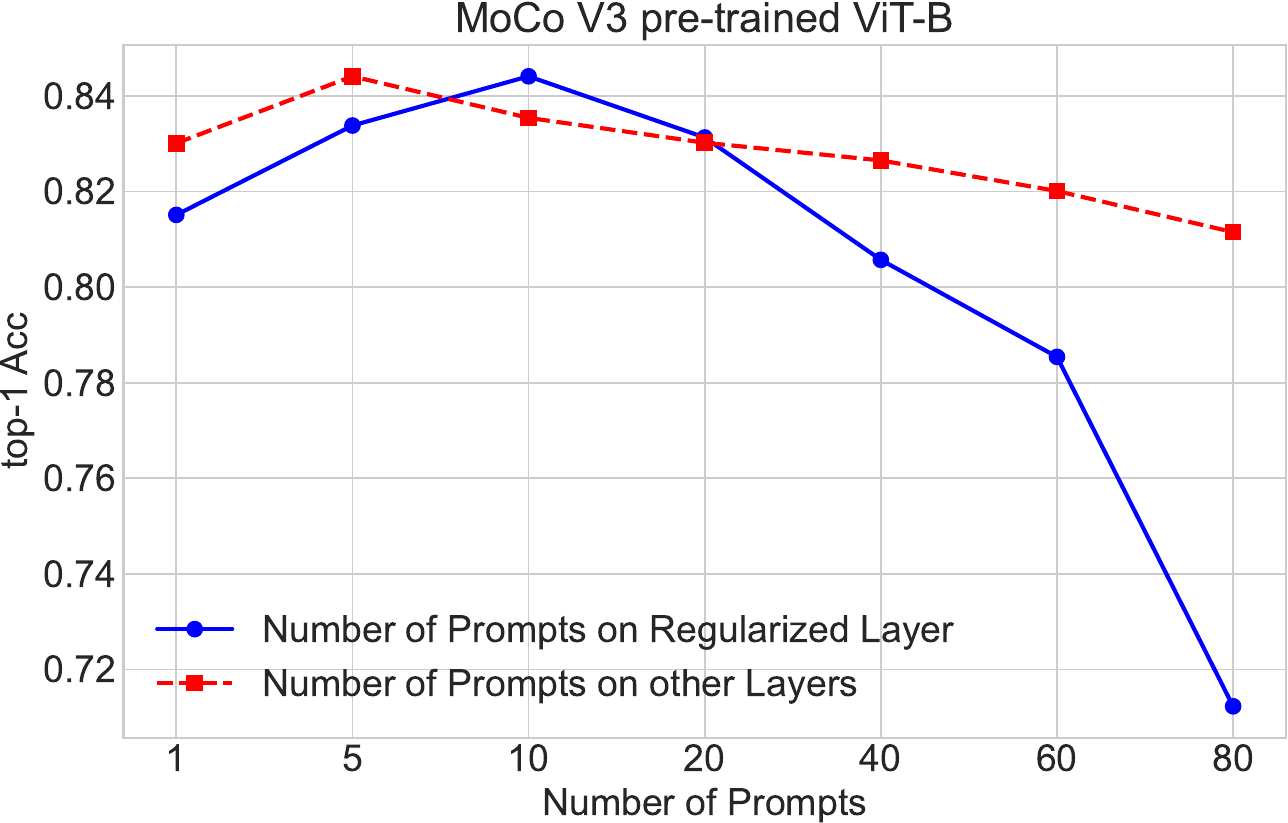}
        \label{fig:sub6}
    \end{subfigure}
    
\end{center}\vspace{-1.5em}
\caption{\small \textbf{Impact of Hyperparameters in Three Pre-trained Models on CUB-200-2011}: This figure illustrates the impact of hyperparameters on the performance of our proposed method across three pre-trained models (Supervised ViT, MAE, and MoCo-v3) on the CUB-200-2011 dataset. The hyperparameters investigated include the weight factors $\beta$ and $\lambda$ for the two proposed $\mathcal{L}_\text{ML}$ losses, the number of prompts in the metric guidance layer, and the number of prompts in other layers. The results show that the optimal weight factors are less than $1.0$, indicating that a balanced contribution from the $\mathcal{L}_\text{ML}$ losses is beneficial for performance. Furthermore, the number of prompts in the guidance layer exhibits higher sensitivity compared to the number of prompts in other layers, suggesting that the choice of prompt configuration in the guidance layer plays a crucial role in the effectiveness of our method. These findings provide insights into the importance of carefully tuning the hyperparameters to achieve optimal performance across different pre-trained models.
}
\label{fig:impact_2}
\vspace{-1em}
\end{figure*}
\begin{table}[th]
\caption{Summary of notation used throughout the paper.}
\label{tab:notation}
\centering
\resizebox{0.48\textwidth}{!}{
    \begin{tabular}{llp{6cm}}
    \toprule
    \textbf{Symbol} & \textbf{Domain} & \textbf{Description} \\
    \midrule
    $\mathbf{I}$ & $\mathbb{R}^{H \times W \times C}$ & Input image \\
    $N$ & $\mathbb{N}$ & Number of image patches \\
    $D$ & $\mathbb{N}$ & Dimension of embedding space \\
    $L$ & $\mathbb{N}$ & Number of Transformer layers \\
    $l$ & $\{1,\ldots,L\}$ & Layer index \\
    $\mathbf{x}_\text{cls}$ & $\mathbb{R}^D$ & Class [CLS] token \\
    $\mathbf{X}$ & $\mathbb{R}^{(N+1) \times D}$ & Sequence of embeddings \\
    $\mathbf{H}_i$ & $\mathbb{R}^{D \times D}$ & Attention head $i$ \\
    $\mathbf{Q}, \mathbf{K}, \mathbf{V}$ & $\mathbb{R}^{N \times D}$ & Query, Key, Value matrices \\
    $M$ & $\mathbb{N}$ & Number of prompt tokens \\
    $\mathbf{P}$ & $\mathbb{R}^{M \times D}$ & Set of prompt tokens \\
    $\mathbf{p}_k$ & $\mathbb{R}^D$ & $k$-th prompt token \\
    $\hat{\mathbf{x}}$ & $\mathbb{R}^D$ & L2-normalized vector of $\mathbf{x}$ \\
    $y_i$ & $\{1,\ldots,C\}$ & Class label for sample $i$ \\
    $C$ & $\mathbb{N}$ & Number of classes \\
    $\delta$ & $\mathbb{R}^+$ & Margin in metric learning \\
    $\tau$ & $\mathbb{R}^+$ & Temperature parameter \\
    $\mathcal{P}$ & - & Set of all prompts \\
    $\mathcal{P}^+$ & - & Set of positive prompts \\
    $\mathcal{X}_p^+$ & - & Set of positive visual tokens \\
    $\mathcal{X}_p^-$ & - & Set of negative visual tokens \\
    $\beta, \lambda$ & $\mathbb{R}^+$ & Loss weighting hyperparameters \\
    $\mathbf{S}$ & $\mathbb{R}^{C \times D}$ & Class representations \\
    $\mathbf{W}_Q^l$ & $\mathbb{R}^{D \times D}$ & Query projection matrix at layer $l$ \\
    $\mathbf{b}_K, \mathbf{b}_V$ & $\mathbb{R}^D$ & Bias terms for Key and Value projections \\
    \bottomrule
    \end{tabular}
    }
\end{table}
%%%%% ==================================== VTAB-1k Result ============================================
\begin{table*}
\centering
\scriptsize
\setlength{\tabcolsep}{3pt}
\resizebox{\textwidth}{!}{
\begin{tabular}{l|ccccccc|cccc|cccccccc}
\toprule
~ & \multicolumn{7}{c}{\textit{Natural} (7)} & \multicolumn{4}{|c}{\textit{Specialized} (4)} & \multicolumn{8}{|c}{\textit{Structured} (8)}\\
        Methods & \rotatebox{90}{CIFAR-100} & \rotatebox{90}{Caltech101} & \rotatebox{90}{DTD} 
        & \rotatebox{90}{Flowers102} & \rotatebox{90}{Pets} & \rotatebox{90}{SVHN} & \rotatebox{90}{Sun397} 
        & \rotatebox{90}{Patch Camelyon} & \rotatebox{90}{EuroSAT} & \rotatebox{90}{Resisc45} 
        & \rotatebox{90}{Retinopathy} & \rotatebox{90}{Clevr/count} & \rotatebox{90}{Clevr/distance} 
        & \rotatebox{90}{DMLab} & \rotatebox{90}{KITTI/distance} & \rotatebox{90}{dSprites/loc} 
        & \rotatebox{90}{dSprites/ori} & \rotatebox{90}{SmallNORB/azi} 
        & \rotatebox{90}{SmallNORB/ele} \\
\midrule \hline
\multicolumn{20}{c}{ \textit{ViT-B with supervised pre-trained on ImageNet-21K} } \\ \hline
Full fine-tuning \cite{jia2022visual} & 68.9 & 87.7 & 64.3 & 97.2 & 86.9 & 87.4 & 38.8 & 79.7 & 93.7 & 84.2 & 73.9 & 56.3 & 58.6 & 41.7 & 65.5 & 57.5 & 46.7 & 25.7 & 29.1 \\
Linear probing \cite{jia2022visual} & 63.4 & 85.0 & 63.2 & 97.0 & 86.3 & 36.6 & 51.0 & 78.5 & 87.5 & 68.6 & 74.0 & 34.3 & 30.6 & 33.2 & 55.4 & 12.5 & 20.0 & 9.6 & 19.2  \\
Adapter \cite{adapter19} & 74.1 & 86.1 & 63.2 & 97.7 & 87.0 & 34.6 & 50.8 & 76.3 & 88.0 & 73.1 & 70.5 & 45.7 & 37.4 & 31.2 & 53.2 & 30.3 & 25.4 & 13.8 & 22.1  \\
Bias \cite{zaken2021bitfit} & 72.8 & 87.0 & 59.2 & 97.5 & 85.3 & 59.9 & 51.4 & 78.7 & 91.6 & 72.9 & 69.8 & 61.5 & 55.6 & 32.4 & 55.9 & 66.6 & 40.0 & 15.7 & 25.1  \\

VPT-Deep \cite{jia2022visual}  & \textbf{78.8} & 90.8 & 65.8 & 98.0 & 88.3 & 78.1 & 49.6 & 81.8 & 96.1 & 83.4 & 68.4 & 68.5 & 60.0 & 46.5 & \textbf{72.8} & 73.6 & 47.9 & 32.9 & 37.8  \\
DA-VPT+ (ours) & 74.4 & \textbf{92.7} & \textbf{74.3} & \textbf{99.4} & \textbf{91.3} & \textbf{91.5} & \textbf{50.3} & \textbf{86.2} & \textbf{96.2} & \textbf{87.2} & \textbf{76.3} & \textbf{81.3} & \textbf{62.5} & \textbf{52.8} & 65.3 & \textbf{84.9} & \textbf{51.0} & \textbf{33.1} & \textbf{48.7} \\
\hline
\multicolumn{20}{c}{ \textit{ViT-B with MAE pre-trained on ImageNet-1K} } \\ \hline
Full fine-tuning \cite{jia2022visual} & 24.6 & 84.2 & 56.9 & 72.7 & 74.4 & 86.6 & 15.8 & 81.8 & 94.0 & 72.3 & 70.6 & 67.0 & 59.8 & 45.2 & 75.3 & 72.5 & 47.5 & 30.2 & 33.0 \\
DA-VPT+ (ours) & 38.7 & 87.6 & 64.6 & 83.5 & 86.1 & 83.6 &  22 & 85 & 94.6 & 79.0  & 73.2 & 77.6 & 63.8 & 46.9 & 65.7 & 90.8 & 53.0 & 28.8 & 47.7 \\
\hline
\multicolumn{20}{c}{ \textit{ViT-B with MoCo-V3 pre-trained on ImageNet-1K} } \\ \hline
Full fine-tuning \cite{jia2022visual} & 57.6 & 91.0 & 64.6 & 91.6 & 79.9 & 89.8 & 29.1 & 85.1 & 96.4 & 83.1 & 74.2 & 55.2 & 56.9 & 44.6 & 77.9 & 63.8 & 49.0 & 31.5 & 36.9 \\
DA-VPT+ (ours) & 63.5 & 90.7 & 69.8 & 92.5 & 90.6 & 90.5 & 40.5 & 85.8 & 96.0 & 83.9 & 73.2 & 80.5 & 62.3 & 49.8 & 63.7 & 84.2 & 52.2 & 30.3 & 48.9 \\

\bottomrule
\end{tabular}
}
\caption{Results of details of performance comparisons on the VTAB-1k benchmark with ViT-B/16 models with supervised, MAE and MoCo-V3 pre-training.}
\label{tab:result_vtab_vit}
\end{table*}

\section{More Examples of Attention Maps on Prompts}
We also demonstrate some representative attention visualizations from CUB-200-2011 and Stanford Dogs datasets. For each image, we display the attention map corresponding to its class-assigned prompt. The attention patterns demonstrate how our method learns to focus on class-discriminative regions. 

\begin{figure*}[htbp]
    \centering
    \begin{subfigure}[b]{0.46\textwidth} 
        \centering
        \includegraphics[width=\textwidth]{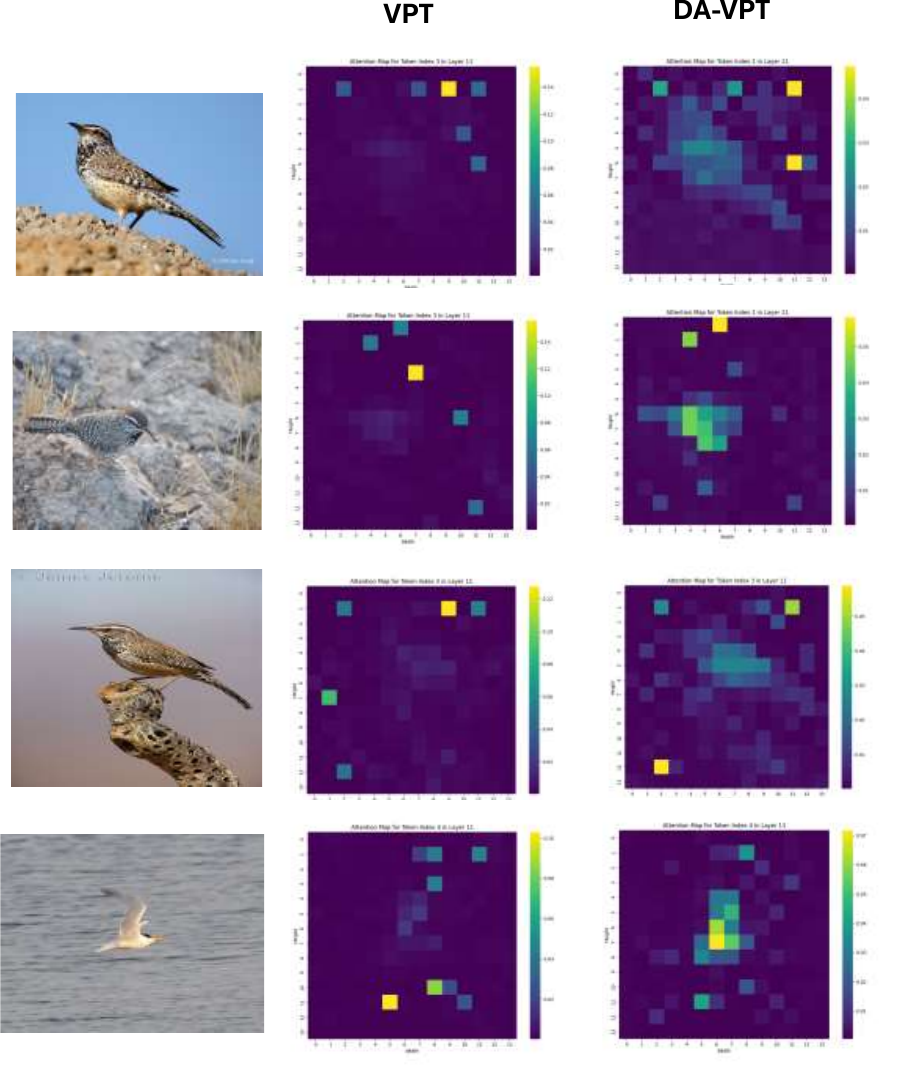}
        \caption{Part 1}
        \label{fig:more_examples:1}
    \end{subfigure}
    \hspace{-0.5em}
    \begin{subfigure}[b]{0.48\textwidth} 
        \centering
        \includegraphics[width=\textwidth]{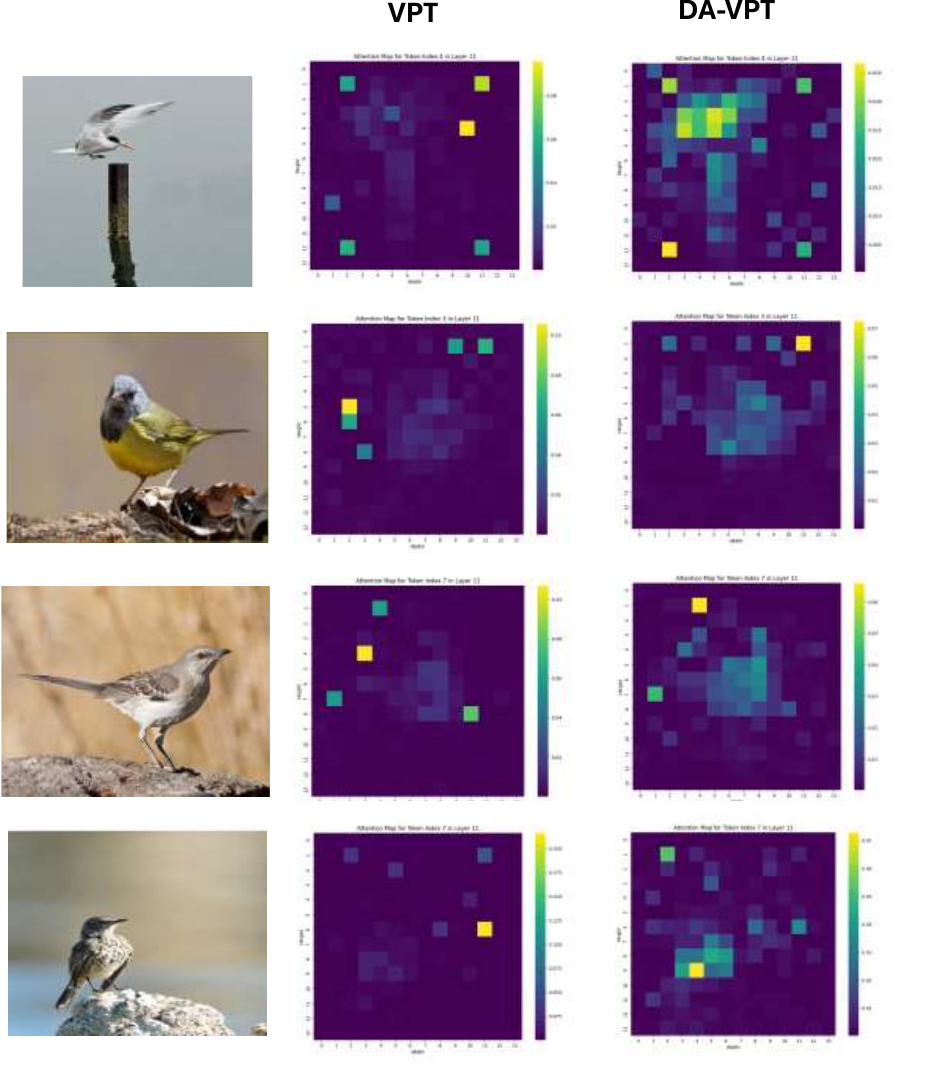}
        \caption{Part 2}
        \label{fig:more_examples:2}
    \end{subfigure}
    \begin{subfigure}[b]{0.50\textwidth} 
        \centering
        \includegraphics[width=\textwidth]{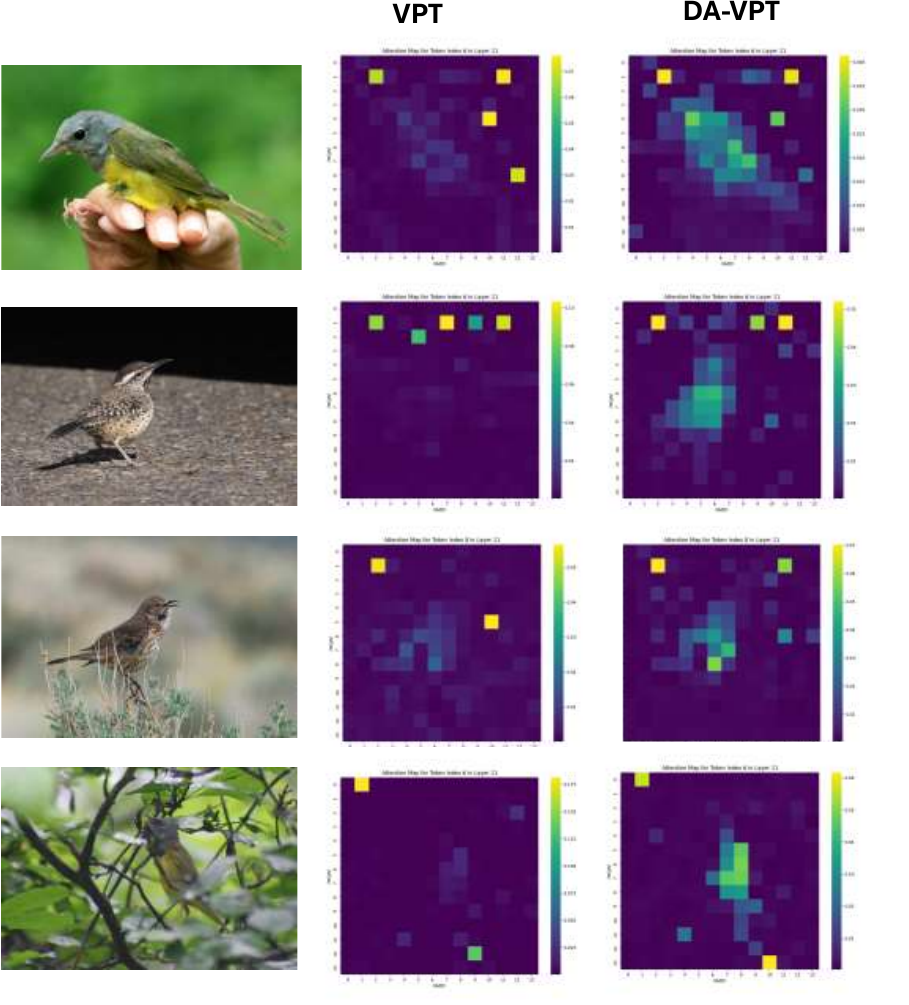}
        \caption{Part 3}
        \label{fig:more_examples:3}
    \end{subfigure}
    \begin{subfigure}[b]{0.46\textwidth} 
        \centering
        \includegraphics[width=\textwidth]{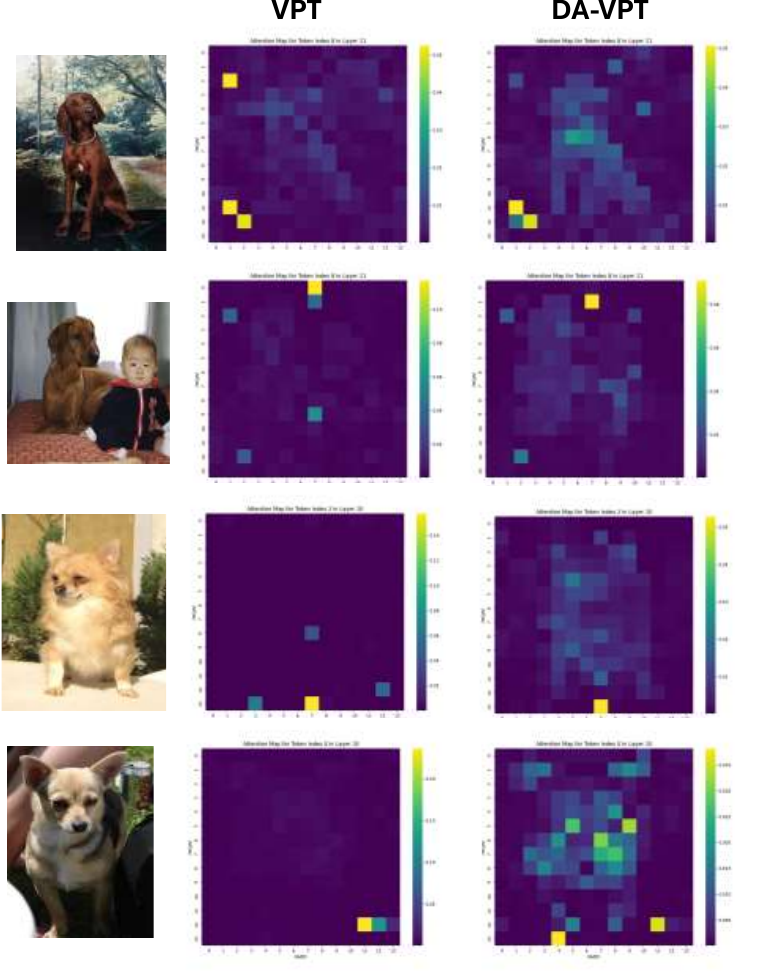}
        \caption{Part 4}
        \label{fig:more_examples:4}
    \end{subfigure}
    \caption{\textbf{Visualization of Class-Specific Attention Maps.} \ref{fig:more_examples:1},\ref{fig:more_examples:2},\ref{fig:more_examples:3}: Examples from CUB-200-2011 showing fine-grained bird features. \ref{fig:more_examples:4}: Examples from Stanford Dogs highlighting breed-specific characteristics. These visualizations illustrate the model's ability to capture class-relevant visual features across different fine-grained classification tasks.}
    \label{fig:more_examples}
\end{figure*}

\end{document}